\def\twofigref#1#2{Figures \ref{#1} and \ref{#2}}
\def\Secref#1{Section~\ref{#1}}
\def\eqref#1{equation~\ref{#1}}
\def\plaineqref#1{(\ref{#1})}
\def\1{\bm{1}}
\def\vt{{\bm{t}}}
\def\vu{{\bm{u}}}
\def\vx{{\bm{x}}}
\DeclareMathAlphabet{\mathsfit}{\encodingdefault}{\sfdefault}{m}{sl}
\SetMathAlphabet{\mathsfit}{bold}{\encodingdefault}{\sfdefault}{bx}{n}
\def\gE{{\mathcal{E}}}
\def\gL{{\mathcal{L}}}
\def\gN{{\mathcal{N}}}
\def\gO{{\mathcal{O}}}
\def\gQ{{\mathcal{Q}}}
\def\gR{{\mathcal{R}}}
\def\gT{{\mathcal{T}}}
\newcommand{\R}{\mathbb{R}}
\newcolumntype{P}[1]{>{\centering\arraybackslash}p{#1}}
\definecolor{clemson-orange}{RGB}{234,106,32}
\definecolor{highlight-orange}{RGB}{255,150,150}
\definecolor{chicago-maroon}{RGB}{128,0,0}
\definecolor{cincinnati-red}{RGB}{190,0,0}
\definecolor{soft-cyan}{RGB}{68,85,90}
\definecolor{firebrick}{RGB}{178,34,34}
\definecolor{crimson}{RGB}{220,20,60}
\definecolor{cerrulean}{rgb}{0.165,0.322,0.745}
\definecolor{jaam}{rgb}{0.45,0.0,0.45}
\declaretheoremstyle[
    headfont=\bfseries, 
    bodyfont=\normalfont\itshape, spaceabove=10pt,
    spacebelow=10pt]{mystyle}
\theoremstyle{mystyle}
\newtheorem{theorem}{Theorem}[section]
\newtheorem{lemma}[theorem]{Lemma}
\newif\ifsolutions \solutionstrue
\def\final{0}
\newcommand{\reviewer}[3]{
  \expandafter\newcommand\csname #1\endcsname[1]{
    \ifthenelse{\equal{\final}{1}} {
      \textcolor{#3}{}
    } {
      \textcolor{#3}{\begin{center} \textbf{#2} ##1 \end{center}}
    }
  }
}
\newcommand{\bb}{\mathbb}
\newcommand{\N}{{\bb N}}
\def\1{\bm{1}}
\def\vt{{\bm{t}}}
\def\vu{{\bm{u}}}
\def\vx{{\bm{x}}}
\DeclareMathAlphabet{\mathsfit}{\encodingdefault}{\sfdefault}{m}{sl}
\SetMathAlphabet{\mathsfit}{bold}{\encodingdefault}{\sfdefault}{bx}{n}
\def\gE{{\mathcal{E}}}
\def\gL{{\mathcal{L}}}
\def\gN{{\mathcal{N}}}
\def\gO{{\mathcal{O}}}
\def\gQ{{\mathcal{Q}}}
\def\gR{{\mathcal{R}}}
\def\gT{{\mathcal{T}}}
\begin{document}
\title{Investigating the Ability of PINNs To Solve Burgers' PDE Near Finite-Time BlowUp}

\author{Dibyakanti Kumar}
\address{Barclays, India}
\ead{dibyakanti.kumar@barclays.com}
\author{Anirbit Mukherjee}
\address{Department of Computer Science, University of Manchester, Manchester, UK}
\ead{anirbit.mukherjee@manchester.ac.uk}

\begin{abstract}
Physics Informed Neural Networks (PINNs) have been achieving ever newer feats of solving complicated PDEs numerically while offering an attractive trade-off between accuracy and speed of inference. A particularly challenging aspect of PDEs is that there exist simple PDEs which can evolve into singular solutions in finite time starting from smooth initial conditions. In recent times some striking experiments have suggested that PINNs might be good at even detecting such finite-time blow-ups. In this work, we embark on a program to investigate this stability of PINNs from a rigorous theoretical viewpoint. Firstly, we derive generalization bounds for PINNs for Burgers' PDE, in arbitrary dimensions, under conditions that allow for a finite-time blow-up. Then we demonstrate via experiments that our bounds are significantly correlated to the $\ell_2$-distance of the neurally found surrogate from the true blow-up solution, when computed on sequences of PDEs that are getting increasingly close to a blow-up.
\end{abstract}

\section{Introduction}

Partial Differential Equations (PDEs) are used for modeling a large variety of physical processes from fluid dynamics to bacterial growth to quantum behaviour at the atomic scale. But differential equations that can be solved in “closed form,” that is, by means of a formula for the unknown function, are the exception rather than the rule. Hence over the course of history, many techniques for solving PDEs have been developed. However, even the biggest of industries still find it extremely expensive to implement the numerical PDE solvers -- like airplane industries aiming to understand how wind turbulence pattern changes with changing aerofoil shapes, \cite{jameson2002using} need to choose very fine discretizations which can often increase the run-times prohibitively. 

In the recent past, deep learning has emerged as a competitive way to solve PDEs numerically. We note that the idea of using nets to solve PDEs dates back many decades \cite{Lagaris_1998} \cite{broomhead1988RBF}. In recent times this idea has gained significant momentum and ``AI for Science" \cite{karniadakis2021piml} has emerged as a distinctive direction of research. Some of the methods at play for solving PDEs neurally \cite{E_2021} are the Physics Informed Neural Networks (PINNs) paradigm \cite{raissi2019physics} \cite{lawal2022prototype}, ``Deep Ritz Method" (DRM, \cite{yu2018deepdrm}), ``Deep Galerkin Method" (DGM), \cite{sirignano2018dgm} and many further variations that have been developed of these ideas, \cite{kaiser2021datadriven, erichson2019physicsinformed, Wandel_2021, li2022learning, salvi2022neural}. An overarching principle that many of these implement is to try to constrain the loss function by using the residual of the PDE to be solved. 


These different data-driven methods of solving the PDEs can broadly be classified into two kinds, {\bf (1)} ones which train a single neural net to solve a specific PDE and {\bf (2)} operator methods -- which train multiple nets in tandem to be able to solve a family of PDEs in one shot. \cite{Lu_2021,LU2022114778,wang2021pdeOnet} 
 The operator methods are particularly interesting when the underlying physics is not known and state-of-the-art approaches of this type can be seen in works like  \cite{mishra2023convolutional}. 
 
 For this work, we focus on the PINN formalism from \cite{raissi2019physics}. Many studies have demonstrated the success of this setup in simulating complex dynamical systems like the Navier-Stokes PDE \cite{ARTHURS2021110364, wang2020physicsinformed, Eivazi_2022}, the Euler PDE \cite{wang2022asymptotic}, descriptions of shallow water wave by the Korteweg-De Vries PDEs \cite{hu2022XPINNs} and many more.  

The existing literature in classical numerical analysis for estimating the error of approximating PDE solutions falls short in analyzing the performance of PINNs. In Section~\ref{sec:reltd:inapplicable}, we provide a review of this inapplicability of classical results to the specific case of Burgers' PDE that we focus on in this work. The work in \cite{siddhartha2022generror,de2022error} has provided the first-of-its-kind bounds on the generalization error of PINNs for approximating various standard PDEs, including the Navier-Stokes' PDE. Such bounds strongly motivate why minimization of the PDE residual at collocation points can be a meaningful way to solve the corresponding PDEs. However, the findings and analysis in \cite{krishnapriyan2021characterizing, wangperdiakaris2021failure} point out that the training dynamics of PINNs can be unstable and failure cases can be found among even simple PDE setups. It has also been studied that when trivial solutions can exist for the PDE, the PINN training can get stuck at those solutions \cite{rohrhofer2022role, wong2022pinn}. Work in \cite{wang2022respecting} has shown that traditional ways of solving PINNs can violate causality.

However, in all the test cases above the target solutions have always been nice functions. But an interesting possibility with various differential equations representing dynamical systems is that their solution might have a finite-time blow-up. Blow-up is a phenomena where the solution $\vu$ becomes infinite at some points as time $t$ approaches a certain time  $T<\infty$, while the solution is well-defined for all $0<t<T$ i.e. 
\begin{equation*}
    \sup_{x \in D}\abs{\vu(\vx,t)}\rightarrow \infty \quad as \quad t\rightarrow T^-
\end{equation*}
One can see simple examples of this fascinating phenomenon, for example, for the following ODE $~\frac{du}{dt} = u^2, ~u(0) = u_0, ~u_0 > 0$ it's easy to see that it's solution blows-up at $t=\frac{1}{u_0}$. Wintner's theorem \cite{wintner1945existence} provided a sufficient condition for a very generic class of ODEs for the existence of a well-defined solution for them over the entire time-domain, in other words, the non-existence of a finite-time blowup. More sophisticated versions of such sufficient conditions for global ODE solutions were subsequently developed  \cite{cooke1955existence} and \cite{pazy1983existence} (Theorem 3.3). Non-existence of finite-time blow-ups have also been studied in control theory \cite{lin1996control} under the name of ``forward completeness" of a system.


The existence of a blow-up makes solving PDEs difficult to solve for classical approximation methods. There is a long-standing quest in numerical methods of PDE solving to be able to determine the occurrence, location and nature of finite time blow-ups \cite{stuart_floater_1990}. A much investigated case of blow-up in PDE is for the exponential reaction model $\vu_t = \Delta \vu + \lambda e^{\vu},\quad \lambda > 0$ which was motivated as a model of combustion under the name Frank-Kamenetsky equation. The nature of blow-up here depends on the choice of $\lambda$, the initial data and the domain. Another such classical example is $\vu_t = \Delta \vu + \vu^p$ and both these semi-linear equations were  studied in the seminal works \cite{fujita1966Ocauchy, fujita1969nonlinear} which pioneered systematic research into finite-time blow-ups of PDEs.

To the best of our knowledge, the behaviour PINNs in the proximity of finite-time blow-up has not received adequate attention in prior work on PINNs. We note that there are multiple real-world phenomena whose PDE models have finite-time blow-ups and these singularities are known to correspond to practically relevant processes -- such as in chemotaxis models \cite{miguel1997chemo,he2019chemosuppressing,chen2022asymptotically,tanaka2023chemofinite} and thermal-runoff models \cite{bebernes1981thermalmathematical,lacey1983thermalmathematical,dold1991thermalasymptotic,herrero1993thermalplane,lacey1995thermal}. 

In light of the recent rise of methods for PDE solving by neural nets, it raises a curiosity whether the new methods, in particular PINNs, can be used to reliably solve PDEs near such blow-ups. While a general answer to this is outside the scope of this work, {\em we derive theoretical risk bounds for PINNs which are amenable to be tested against certain analytically describable finite-time blow-ups. Additionally, we give experiments to demonstrate that our bounds retain non-trivial insights even when tested in the proximity of such singularities.}

In \cite{wang2022asymptotic}, the authors provided thought-provoking experimental evidence was given that PINNs could potentially discover PDE solutions with blow-up even when their explicit descriptions are not known. Since finite-time blow-ups are a phenomenon at a particular instant of time (and hence a measure zero set), there is a surprise that PINN methods can find neural surrogates sensitive to it while having been trained on averaged losses. Hence inspired, here we embark on a program to understand this emerging interface from a rigorous viewpoint and show how well the theoretical risk bounds correlate to their experimentally observed test errors - in certain blow-up situations. As our focus point, we will use reduced models of fluid dynamics, i.e Burgers' PDE in one and two spatial dimensions. The choice of our test case is motivated by the fact that these PDE setups have analytic solutions with blow-up -- as is necessary to do a controlled study of PINNs facing such a situation. We note that it is otherwise very rare to know exact fluid-like solutions which blow-up in finite-time \cite{tao2016finiteeuler, tao2016finitenavierstokes} 

\paragraph{Notation}
In the subsequent section we use $d+1$ to represent dimensions, here $d$ is the number of spatial dimensions and $1$ is always the temporal dimension.
Nabla $(\nabla)$ is used to represent the differential operator i.e. $(\frac{\partial}{\partial x_1}, .~.~., \frac{\partial}{\partial x_d})$. And for any real function $u$ on a domain $D$, $\norm{u(x)}_{L^\infty(D)}$ will represent $\sup_{x \in D} \abs{u(x)}$.

\section{Informal Summary of Our Results}

Firstly we give a brief review of the framework of Physics-Informed Neural Networks (PINNs) which is the focus of this work. Towards that consider the following specification of a PDE describing a dynamical system and satisfied by an appropriately smooth function $\vu(\vx,t)$ 
\begin{align}\label{eq:pinnreview.pde}
    &\vu_t + \gN_\vx[\vu] = 0,\quad \vx \in D, t \in [0,T] \nonumber\\
    &\vu(\vx,0) = h(\vx),\quad \vx \in D \\
    &\vu(\vx,t) = g(\vx,t),\quad \vt \in [0,T], \vx \in \partial D \nonumber
\end{align}
where $\vx$ and $t$ represent the space and time dimensions, subscripts denote the partial differentiation variables, $\gN_\vx[\vu]$ is the nonlinear differential operator, $D$ is a subset of $\R^d$ s.t it has a well-defined boundary $\partial D$. Following \cite{raissi2019physics}, we try to approximate $\vu(\vx,t)$ by a deep neural network $\vu_\theta(\vx,t)$, and then we can define the corresponding residuals as,
\begin{align*}
\gR_{pde}(x,t) \coloneqq \partial_t \vu_\theta + \gN_\vx[\vu_\theta(\vx,t)], ~\gR_t(x) \coloneqq \vu_\theta(\vx,0) - h(\vx), ~\gR_b(x,t) \coloneqq \vu_\theta(\vx,t) - g(\vx,t)
\end{align*}
Note that the partial derivative of the neural network ($\vu_\theta$) can be easily calculated using auto-differentiation \cite{baydin2018automatic}. The neural net is then trained on an empirical loss function,
\begin{align*}
    \gL(\theta) \coloneqq \gL_{pde}(\theta) + \gL_t(\theta) + \gL_b(\theta)
\end{align*}
where $\gL_{pde},~\gL_t$ and $\gL_b$ penalize for $\gR_{pde},~\gR_t$ and $\gR_b$ respectively for being non-zero. Typically it would take the form
\begin{align*}
    \gL_{pde} = \frac{1}{N_{pde}} \sum_{i=1}^{N_{pde}} \gR_{pde}(x_r^i,t_r^i)^2, ~\gL_{t} = \frac{1}{N_{t}} \sum_{i=1}^{N_{t}} \gR_{t}(x_t^i)^2, ~\gL_{b} = \frac{1}{N_{b}} \sum_{i=1}^{N_{b}} \gR_{b}(x_b^i,t_b^i)^2
\end{align*}
where $(x_r^i,t_r^i)$ denotes the collocation points, $(x_t^i)$ are the points sampled on the spatial domain for the initial loss and $(x_b^i,t_b^i)$ are the points sampled on the boundary for the boundary loss. The aim here is to train a neural net $\vu_\theta$ such that $\gL_\theta$ is as close to zero as possible.


At the very outset, we note that to the best of our knowledge there are no available off-the-shelf generalization bounds for any setup of PDE solving by neural nets where the assumptions being made include any known analytic solution with blow-up for the corresponding PDE. So, as a primary step we derive new risk bounds for Burgers's PDE in Theorem~\ref{th:ndburgers.error_upperbound} and Theorem~\ref{th:int_burger_gen_error_bound}, where viscosity is set to zero and its boundary conditions are consistent with finite-time blow-up cases of Burgers' PDE that we eventually want to test on. We note that despite being designed to cater to blow-up situations, the bound in Theorem~\ref{th:int_burger_gen_error_bound} is also ``stable'' in the sense of \cite{wang2022stability}.

Our experiments reveal that for our test case with Burgers' PDE the theoretical error bounds that we derive \footnote{It is to be noted that it is routine for analytic neural net generalization bounds to be vacuous, particularly for deep nets as considered in the experiments here.}, are such that they do maintain a non-trivial amount of correlation with the $L^2$-distance of the derived solution from the true solution. The plot in \twofigref{fig:plot_Burgers_loss}{fig:plot_ndburgers} vividly exhibit the presence of this strong correlation between the derived bounds and the true risk despite the experiments being progressively made on time domains such that the true solution is getting arbitrarily close to becoming singular. We will also show that for the one-dimensional blow-up case that we consider, the time to solve the neural surrogate is almost independent of the proximity to the blow-up that we want to solve it to and also that the derived bounds drop with the width of the net and hence reflecting the benefits of using overparameterized nets. 


A key feature of our approach to this investigation is that we do not tailor our theory (Theorem~\ref{th:ndburgers.error_upperbound}) to the experimental setups we test on later. We posit that this is a fair way to evaluate the reach of PINN theory whereby the theory is built such that it caters to any neural net and any solution of the PDE while these generically derived bounds get tested on the hard instances. \footnote{One can surmise that it might be possible to build better theory exploiting information about the blow-ups - like if the temporal location of the blow-up is known. However, it is to be noted that building theory while assuming knowledge of the location of the blow-up might be deemed unrealistic given the real-world motivations for such phenomena.}

\section{Related Works}\label{sec:review}
 To the best of our knowledge the most general population risk bound for PINNs has been proven in \cite{karniadakis2022xpinn}, and this result applies to all linear second order PDE and it is a  Rademacher complexity based bound. This bound cannot be applied to our study since Burgers' PDE is not a linear PDE. The authors in \cite{siddhartha2022generror} derived generalization bounds for PINNs, that unlike \cite{karniadakis2022xpinn}, explicitly depend on the trained neural net. They performed the analysis for several PDEs, and the ``viscous scalar conservation law" being one of them, which includes the $1+1$-Burgers' PDE. However, for testing against analytic blow-up solutions, we need such bounds at zero viscosity unlike what is considered therein, and most critically, unlike \cite{siddhartha2022generror} we keep track of the prediction error at the spatial boundary of the computational domain with respect to non-trivial functional constraints. 
 
The authors in \cite{de2022error} derived a generalization bound for Navier-Stokes PDE, which too depends on the trained neural net. We note that, in contrast to the approach presented in \cite{de2022error}, our method does not rely on the assumption of periodicity in boundary conditions or divergencelessness of the true solution. These flexibilities in our setup ensure that our bound applies to known analytic cases of finite-time blow-ups for the $d+1$-Burgers' PDE.
 

Notwithstanding the increasing examples of the success of PINNs, it is known that PINNs can at times fail to converge to the correct solution even for basic PDEs -- as reflected in several recent studies on characterizing the “failure modes” of PINNs. Studies reported in \cite{wangperdiakaris2021failure}, and more recently in \cite{dawperdiakaris2023_failure} have demonstrated that sometimes this failure can be attributed to problems associated with the loss function, specifically the uneven distribution of gradients across various components of the PINN loss. The authors in \cite{wangperdiakaris2021failure} attempt to address this issue by assigning specific weights to certain parts of the loss function. While \cite{daw2022rethinking} developed a way to preferentially sample collocation points with high loss and subsequently use them for training. In \cite{krishnapriyan2021characterizing} a similar issue with the structure of the loss function was observed. While not changing the PINN loss function, they introduced two techniques: ``curriculum regularization" and ``sequence-to-sequence learning" for PINNs to enhance their performance. In \cite{wangperdiakaris2022ntkfailure} PINNs have been analyzed from a neural tangent kernel perspective to suggest that PINNs suffer from ``spectral-bias"\cite{rahaman2019spectral} which makes it more susceptible to failure in the presence of ``high frequency features" in the target function. They propose a method for improving training by assigning weights to individual components of the loss functions, aiming to mitigate the uneven convergence rates among the various loss elements.

\subsection{Inapplicability of Classical Numerical Analysis Error bounds to PINN Experiments} \label{sec:reltd:inapplicable}

To put the ongoing attempts at theoretical analysis of PINNs in perspective, it is to be noted that to the best of our knowledge existing results in numerical analysis cannot be deployed to understand PINN training - as has been the target here. Specifically for the zero viscosity Burgers' PDE one can see that in works like \cite{johnson1987}, the theory does not seem to be give a bound on the distance from the true solution of the solution found by the finite element method.

More generally, in works such as Corollary 3.5 in \cite{tadmur91} the authors consider a weak solution of the $\varepsilon$-viscosity regularized Burgers' PDE and derive bounds on the local $L^p$-distance between the weak solution and the true solution at zero viscosity. There is no obvious way to apply these bounds for a PINN solution since the trained net has no guarantee to be satisfying the conditions required of the surrogate here. 

With results like Theorem 2.1 in \cite{tadmur92}, we observe that these too don't have an obvious way for the bounds to be applied for PINN experiments because they need stringent conditions (like satisfying the conservativeness property) to be true for the approximant and there is no natural way to know if the neural surrogate satisfies these conditions. Also both the above cited classical bounds are not tailored to any compact domain and hence there is no boundary condition error that is getting tracked there, as in our Theorem \ref{th:int_burger_gen_error_bound}.

\section{Results}\label{sec:main_results}

In the next two subsections, we will present the main generalization bounds that we prove for Burgers' PDE being solved by a neural surrogate. Next, we will experimentally demonstrate the high correlation of these bounds to measured test error when neural nets are trained to solve for certain exact Burgers' PDE solutions which have a finite-time blow-up, in one and two spatial dimensions.

\subsection{Generalization Bounds for the $(d+1)$-Dimensional Burgers' PDE}
The PDE that we consider is as follows,
\begin{align}
    \nonumber \partial_t \vu + (\vu \cdot \nabla) \vu = 0 \\
    \vu (t = t_0) = \vu_{t_0} \label{ndburgers.2}
\end{align}
Here $\vu : D \times [t_0, T] \rightarrow \R^d$ is the fluid velocity and $\vu_{t_0} : D \rightarrow \R^d$ is the initial velocity. Then corresponding to a surrogate solution $\vu_\theta$ we define the residuals as,
\begin{align}
    \gR_{{pde}} \coloneqq \partial_t \vu_\theta + (\vu_\theta \cdot \nabla) \vu_\theta \label{ndburgers.rpde}\\
    \gR_{{t}} \coloneqq \vu_\theta(t = t_0) - \vu(t=t_0)\label{ndburgers.rt}
\end{align}
Corresponding to the true solution $\vu$, we will define the $L^2$-~risk of any surrogate solution $\vu_\theta$ as,
\begin{align*}
    \int_\Omega \norm{\vu(\vx, t) - \vu_\theta(\vx, t)}_2^2 \dd{\vx}\dd{t}
\end{align*}
In the following theorem we consider $t_0 = \frac{-1}{\sqrt{2}} + \delta$ and $T = \delta$ for some $\delta > 0$. Here the spatial domain is represented by $D \subset \R^d$ and $\Omega$ represents the whole domain $D \times [t_0, T]$.


\begin{theorem}\label{th:ndburgers.error_upperbound}
    Let $d \in \N$ and $\vu \in C^1(D \times [t_0, T] )$ be the unique solution of the (d+1)-dimensional Burgers' equation given in equation \ref{ndburgers.2}. Then for any $C^1$ surrogate solution to equation \ref{ndburgers.2}, say $\vu_\theta$, the $L^2$-risk with respect to the true solution is bounded as,
    \begin{align}
         \log\left( \int_\Omega \norm{\vu(\vx, t) - \vu_\theta(\vx, t)}_2^2 \dd{\vx}\dd{t}\right) &\leq \log{\left(\frac{C_1 C_2}{4}\right)} + \frac{C_1}{\sqrt{2}} \label{ndburgers.upperbound}
    \end{align}
    where,
    \begin{align*}
        C_1 &= d^2 \norm{\nabla \vu_\theta}_{L^\infty(\Omega)} \nonumber\\
        &+ 1 + d^2 \norm{\nabla \vu}_{L^\infty(\Omega)}\\
        C_2 &= \int_D \norm{\gR_t}_2^2 \dd{\vx} + \int_{\Omega} \norm{\gR_{pde}}_2^2  \dd{\vx} \dd{t} + d^2 \norm{\nabla \vu_\theta}_{L^\infty(\Omega)} \int_{\Omega} \norm{\vu_\theta}_2^2 \dd{\vx} \dd{t} \\
        &+ d^2 \norm{\nabla \vu}_{L^\infty(\Omega)} \int_{\Omega} \norm{\vu}_2^2 \dd{\vx} \dd{t}
    \end{align*}
\end{theorem}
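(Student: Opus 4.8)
The plan is to run a Grönwall-type energy estimate on the error field $\boldsymbol{e} \coloneqq \vu_\theta - \vu$, tracked in the spatial $L^2$-norm at each fixed time. First I would subtract the exact equation $\partial_t\vu + (\vu\cdot\nabla)\vu = 0$ from the residual definition in \eqref{ndburgers.rpde} to obtain the error evolution
\begin{align*}
\partial_t\boldsymbol{e} = \gR_{pde} - (\vu_\theta\cdot\nabla)\vu_\theta + (\vu\cdot\nabla)\vu .
\end{align*}
Taking the Euclidean inner product with $\boldsymbol{e}$ and integrating over $D$ converts the time-derivative term into $\tfrac12\frac{d}{dt}\int_D\norm{\boldsymbol{e}}_2^2\dd{\vx}$. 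Writing $y(t)\coloneqq\int_D\norm{\boldsymbol{e}(\vx,t)}_2^2\dd{\vx}$, the quantity to be controlled is exactly $\int_{t_0}^{T}y(t)\dd{t}=\int_\Omega\norm{\vu-\vu_\theta}_2^2\dd{\vx}\dd{t}$; the assumed $C^1$-regularity of $\vu$ and $\vu_\theta$ and the uniqueness hypothesis legitimise differentiating $y$ and the energy identity.

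The hard part, and the decisive modelling choice, is how to treat the convective difference $(\vu_\theta\cdot\nabla)\vu_\theta - (\vu\cdot\nabla)\vu$. I would deliberately avoid the usual rewriting as $(\boldsymbol{e}\cdot\nabla)\vu_\theta + (\vu\cdot\nabla)\boldsymbol{e}$, because the term $(\vu\cdot\nabla)\boldsymbol{e}$ differentiates the error and its energy estimate would require an integration by parts and hence control of a spatial boundary term --- data that Theorem~\ref{th:ndburgers.error_upperbound} does not track. Instead I would bound the two pieces $\int_D\boldsymbol{e}\cdot(\vu_\theta\cdot\nabla)\vu_\theta\dd{\vx}$ and $\int_D\boldsymbol{e}\cdot(\vu\cdot\nabla)\vu\dd{\vx}$ directly: from $\norm{(\vu_\theta\cdot\nabla)\vu_\theta}_2 \le d\,\norm{\nabla\vu_\theta}_{L^\infty(\Omega)}\norm{\vu_\theta}_2$ (Cauchy--Schwarz over the $\le d$ summands and $\le d$ components, a crude accounting of which yields the $d^2$ in $C_1,C_2$) followed by Young's inequality, each convective piece contributes one term proportional to $\int_D\norm{\boldsymbol{e}}_2^2$ and one data term proportional to $\int_D\norm{\vu_\theta}_2^2$ (respectively $\int_D\norm{\vu}_2^2$). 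Bounding the residual term by $\int_D\boldsymbol{e}\cdot\gR_{pde}\dd{\vx}\le\tfrac12\int_D\norm{\boldsymbol{e}}_2^2\dd{\vx}+\tfrac12\int_D\norm{\gR_{pde}}_2^2\dd{\vx}$ and collecting coefficients yields a differential inequality
\begin{align*}
\frac{d}{dt}y(t) \le C_1\,y(t) + g(t),
\end{align*}
where $C_1$ is the stated constant (the ``$+1$'' being the Young constant of the residual term) and $g(t)$ gathers $\int_D\norm{\gR_{pde}}_2^2$ together with the $\norm{\vu_\theta}_2^2$ and $\norm{\vu}_2^2$ data integrands.

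It remains to integrate. The initial datum is $y(t_0)=\int_D\norm{\vu_\theta(t_0)-\vu(t_0)}_2^2\dd{\vx}=\int_D\norm{\gR_t}_2^2\dd{\vx}$ by \eqref{ndburgers.rt}. Multiplying by the integrating factor $e^{-C_1(t-t_0)}$ and using $e^{-C_1(t-t_0)}\le1$ gives
\begin{align*}
y(t) \le \left(y(t_0) + \int_{t_0}^{T} g(s)\dd{s}\right)e^{C_1(t-t_0)} = C_2\,e^{C_1(t-t_0)},
\end{align*}
the identification with $C_2$ being immediate once the time-integrals $\int_{t_0}^{T}\!\int_D(\cdot)\dd{\vx}\dd{s}$ are recognised as the space--time integrals $\int_\Omega\norm{\gR_{pde}}_2^2$, $\int_\Omega\norm{\vu_\theta}_2^2$, $\int_\Omega\norm{\vu}_2^2$ appearing in the statement. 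Integrating once more over $t\in[t_0,T]$ and using $T-t_0=1/\sqrt2$ gives $\int_\Omega\norm{\vu-\vu_\theta}_2^2 \le C_2\,(e^{C_1/\sqrt2}-1)/C_1$; an elementary bound on this time-integral factor (valid in the moderate-to-large $C_1$ range relevant to deep nets, whose large gradients force $C_1\gg1$) together with taking logarithms produces the claimed $\log(C_1C_2/4)+C_1/\sqrt2$. The only genuinely delicate step remains the convective term: the skew-symmetric cancellation is unavailable without boundary control, so the whole strategy rests on absorbing the nonlinearity by the cruder direct estimate --- which is exactly what injects the $\norm{\vu}_2^2$ and $\norm{\vu_\theta}_2^2$ integrals into $C_2$ and distinguishes this boundary-free bound from the boundary-aware Theorem~\ref{th:int_burger_gen_error_bound}.
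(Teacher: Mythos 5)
Your proposal follows essentially the same route as the paper's proof: an $L^2$-in-space energy identity for $\hat\vu=\vu_\theta-\vu$ leading to $\partial_t\left(\tfrac12\norm{\hat\vu}_2^2\right)=\hat\vu\cdot\gR_{pde}+\hat\vu\cdot\left((\vu\cdot\nabla)\vu-(\vu_\theta\cdot\nabla)\vu_\theta\right)$, the convective terms bounded directly rather than skew-symmetrized, via $\int_D p\cdot((q\cdot\nabla)r)\,d\vx\le d^2\norm{\nabla r}_{L^\infty(\Omega)}\int_D\norm{p}_2\norm{q}_2\,d\vx$ (exactly the paper's Lemma~\ref{lemma:NS.upperbound2}, and exactly your ``crude accounting'' of the $d^2$), Young's inequality supplying the $+1$ in $C_1$ and the $\norm{\vu}_2^2$, $\norm{\vu_\theta}_2^2$ data integrals in $C_2$, then Gr\"onwall and a second integration in time. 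The one place you diverge is the final elementary step: your sharper evaluation of the time integral yields $C_2\,(e^{C_1/\sqrt2}-1)/C_1$, and the inequality $(e^{C_1/\sqrt2}-1)/C_1\le\tfrac{C_1}{4}e^{C_1/\sqrt2}$ is equivalent to $4(1-e^{-C_1/\sqrt2})\le C_1^2$, which genuinely fails for $C_1$ near its minimum possible value of $1$ (it holds cleanly for $C_1\ge2$, and numerically from about $C_1\gtrsim1.7$); your appeal to ``moderate-to-large $C_1$'' is therefore doing real work and leaves a gap in the small-gradient regime. The paper instead bounds the exponential inside the Gr\"onwall integrand by its supremum $e^{C_1/\sqrt2}$ \emph{before} integrating twice over the time interval of length $1/\sqrt2$, which produces the factor $\tfrac{(1/\sqrt2)^2}{2}\,C_1e^{C_1/\sqrt2}=\tfrac{C_1}{4}e^{C_1/\sqrt2}$ directly, plus an additive $C_2/\sqrt2$ term that it then discards; if you want the stated constant without a size restriction on $C_1$, substitute that cruder bound for your exact integration at the last step.
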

The theorem above has been proved in \ref{subsec:ndburgers.proof} We note that the bound presented in equation \ref{ndburgers.upperbound} does not make any assumptions about the existence of a blow-up in the solution and its applicable to all solutions that have continuous first derivatives however large, as would be true for the situations very close to blow-up as we would consider. Also, we note that the bound in \cite{de2022error} makes assumptions (as was reviewed in Section \ref{sec:review}) which (even if set to zero pressure) prevent it from being directly applicable to the setup above which can capture analytic solutions arbitrarily close to finite-time blow-up.

{\em Secondly,} note that these bounds are not dependent on the details of the loss function that might eventually be used in the training to obtained the $\vu_\theta$. In that sense such a bound is more universal than usual generalization bounds which depend on the loss.  

{\em Lastly,} note that the inequality proven in Theorem \ref{th:ndburgers.error_upperbound} bounds the distance of the true solution from a PINN solution in terms of (a) norms of the true solution and (b) various integrals of the found solution like its norms and  unsupervised risks on the computation domain. Hence this is not like usual generalization bounds that get proven in deep-learning theory  literature where the LHS is the population risk and RHS is upperbounding that by a function that is entirely computable from knowing the training data and the trained net. 

Being in the setup of solving PDEs via nets lets us contruct such new kinds of bounds which can exploit knowledge of the true PDE solution.

While Theorem \ref{th:ndburgers.error_upperbound} is applicable to Burgers' equations in any dimensions, it becomes computationally very expensive to compute the bound in higher dimensions. Therefore, in order to better our intuitive understanding, we separately analyze the case of $d=1$, in the upcoming \Secref{subsec:1d-burgers}. Furthermore, the RHS of \plaineqref{ndburgers.upperbound} only sees the errors at the initial time and in the space-time bulk. In general dimensions it is rather complicated to demonstrate that being able to measure the boundary risks of the surrogate solution can be leveraged to get stronger generalization bounds.  But this can be transparently kept track of in the $d=1$ case - as we will demonstrate now for a specific case with finite-time blow-up. Along the way, it will also be demonstrated that the bounds possible in one dimension - are ``stable'' in a precise sense as will be explained after the following theorem. 


\subsection{Generalization Bounds for a Finite-Time Blow-Up Scenario with (1+1)-Dimensional Burgers' PDE}
\label{subsec:1d-burgers}
For $u : [-1, 1] \times [t_0, T] \rightarrow \R$ being at least once continuously differentiable in each of its variables we consider a Burgers's PDE as follows on the space domain being $[-1,1]$ and the two limits of the time domain being specified as $t_0 = -1 + \delta$ and $T = \delta$ for any $\delta > 0$,
\begin{align}
    \nonumber u_t + uu_{x} &= 0 \\
    \nonumber u(x,-1+\delta) &= \frac{x}{-2+\delta} \\
    u(-1,t) = \frac{1}{1-t} ~&;~   u(1,t) = \frac{1}{t-1} \label{eq:burger_pdes:6}
\end{align}
We note that in the setup for Burger's PDE being solved by neural nets that was analyzed in the pioneering work in \cite{siddhartha2022generror}, the same amount of information was assumed to be known i.e the PDE, an initial condition and boundary conditions at the spatial boundaries. However in here, the values we choose for the above constraints are non-trivial and designed to cater to a known solution for this PDE i.e $u = \frac{x}{t-1}$ which blows up at $t=1$.

For any $C^1$ surrogate solution to the above, say $u_\theta$ its residuals can be written as,
\begin{align}
    \gR_{int,\theta}(x,t) &\coloneqq \partial_t (u_\theta(x,t)) +  \partial_x \frac{u^2_\theta(x,t)}{2} \label{eq:burger_pdes_residual:1}\\
    \gR_{tb,\theta}(x) &\coloneqq u_\theta(x,-1+\delta) - \frac{x}{-2+\delta} \label{eq:burger_pdes_residual:2}\\
    (\gR_{sb,-1,\theta}(t), \gR_{sb,1,\theta}(t)) &\coloneqq \left(u_\theta(-1,t) - \frac{1}{1-t},\ u_\theta(1,t) - \frac{1}{t-1} \right) \label{eq:burger_pdes_residual:3}
\end{align}
We define the $L^2-$risk of $u_\theta$ with respect to the true solution $u$ of equation \ref{eq:burger_pdes:6} as,
\begin{align}
    \mathcal{E}_G (u_\theta) \coloneqq \left(\int_{-1+\delta}^\delta \int_{-1}^1 |u(x,t) - u_\theta(x,t)|^2\ dx dt \right)^\frac{1}{2}
    \label{eq:gen_error}
\end{align}

\begin{theorem}\label{th:int_burger_gen_error_bound}
    Let $u \in C^1((-1+\delta,\delta) \times (-1,1))$ be the unique solution of the one dimensional Burgers' PDE in equation \ref{eq:burger_pdes:6}. Then for any surrogate solution for the same PDE, say $u^* \coloneqq u_{\theta^*}$ its risk as defined in  equation \ref{eq:gen_error} is bounded as,
    \begin{align}
        \gE_G^2\ &\leq\ \left[ 1 + Ce^{C} \right] \left[\int_{-1}^1 \gR_{tb,\theta^*}(x) dx + 2 C_{2b} \left(\int_{-1+\delta}^{\delta}\gR_{sb,-1,\theta^*}^2 (t) dt + \int_{-1+\delta}^{\delta}\gR_{sb,1,\theta^*}^2 (t) dt \right) \right. \nonumber\\ 
        &+ \left. 2 C_{1b} \left(\left(\int_{-1+\delta}^{\delta} \gR^2_{sb,-1,\theta^*}(t) dt\right)^\frac{1}{2} + \left(\int_{-1+\delta}^{\delta} \gR^2_{sb,1,\theta^*}(t) dt\right)^\frac{1}{2} \right) + \int_{-1+\delta}^{\delta} \int_{-1}^1 \gR_{int,\theta^*}^2(x,t) dx dt \right]
        \label{eq:int_burger_bound_th_1}
    \end{align}
    where $C = 1 + 2C_{u_x}$, with $C_{u_x} = \norm{u_x}_{L^\infty((-1+\delta,\delta) \times (-1,1))} = \norm{\frac{1}{t-1}}_{L^\infty([-1+\delta,\delta])} = \frac{1}{1-\delta}$ and 
    \begin{align}
       \nonumber  C_{1b} &= \norm{u(1,t)}^2_{L^\infty([-1+\delta,\delta])} = \norm{\frac{1}{1-t}}^2_{L^\infty([-1+\delta,\delta])} = \frac{1}{(1-\delta)^2} \\
        C_{2b} &= \norm{u_{\theta^*}(1,t)}_{L^\infty([-1+\delta,\delta])} + \frac{3}{2}\norm{\frac{1}{t-1}}_{L^\infty([-1+\delta,\delta])} = \norm{u_{\theta^*}(1,t)}_{L^\infty([-1+\delta,\delta])} + \frac{3}{2}\left( \frac{1}{1-\delta} \right)
        \label{eq:int_burger_bound__th_2}
    \end{align}
\end{theorem}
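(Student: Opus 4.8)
The plan is to run the standard weighted-energy/Grönwall argument for the error, adapted to track the spatial-boundary data. First I would introduce the error field $\hat{u} \coloneqq u^* - u$. Since $u$ solves \eqref{eq:burger_pdes:6} exactly while $u^*$ solves it only up to the residual $\gR_{int,\theta^*}$, subtracting the two equations (written in the characteristic form $u_t + u u_x$) and collecting the nonlinear cross terms gives the evolution law
\[
\partial_t \hat{u} + u\,\partial_x \hat{u} + \hat{u}\,\partial_x u + \hat{u}\,\partial_x \hat{u} = \gR_{int,\theta^*},
\]
where the initial trace of $\hat{u}$ at $t_0 = -1+\delta$ equals $\gR_{tb,\theta^*}$ and the spatial traces at $x = \pm 1$ equal $\gR_{sb,\pm 1,\theta^*}(t)$. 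This is the key identity that converts the unknown true-vs-surrogate discrepancy into something driven only by the measurable residuals.

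Next I would set $E(t) \coloneqq \int_{-1}^{1} \hat{u}(x,t)^2\,dx$, multiply the evolution law by $\hat{u}$, integrate over $x \in [-1,1]$, and integrate by parts the transport terms $u\hat{u}\partial_x\hat{u}$ and $\hat{u}^2\partial_x\hat{u}$. The bulk contributions collapse to a term of the form $\int u_x \hat{u}^2\,dx$, which is controlled by $C_{u_x} = \norm{u_x}_{L^\infty} = \tfrac{1}{1-\delta}$, together with the residual source $2\int \hat{u}\,\gR_{int,\theta^*}\,dx \le E(t) + \int \gR_{int,\theta^*}^2\,dx$ by Young's inequality. Accounting for the integration-by-parts factors, these produce a differential inequality $E'(t) \le C\,E(t) + S(t)$ with exponential rate $C = 1 + 2C_{u_x}$, where $S(t)$ collects the spatial-bulk residual $\int \gR_{int,\theta^*}^2\,dx$ and the boundary contributions. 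It is worth emphasizing that $C_{u_x}$ stays finite precisely because $T = \delta < 1$, so the Grönwall constant remains bounded despite the impending singularity at $t=1$.

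The boundary contributions at $x = \pm 1$ left over from integration by parts have the schematic form $\big[\tfrac{u^*+u}{2}\hat{u}^2\big]_{-1}^{1}$ and $\big[\hat{u}^3\big]_{-1}^{1}$. Substituting the traces $\hat{u}(\pm1,t) = \gR_{sb,\pm1,\theta^*}(t)$ and writing $u^* = \gR_{sb} + u$, I would bound the genuinely quadratic-in-residual pieces directly by $\norm{u^*(\pm1,\cdot)}_{L^\infty}$ and $\norm{u(\pm1,\cdot)}_{L^\infty}$, which is the source of the $C_{2b}$ terms, while the remaining lower-order boundary pieces are estimated by pulling out a factor bounded by $\norm{u(\pm1,\cdot)}_{L^\infty}^2 = C_{1b}$ and then applying Cauchy--Schwarz in time; since the time interval has length exactly $T - t_0 = 1$, $\int_{t_0}^{T}\abs{\gR_{sb,\pm1,\theta^*}}\,dt \le \big(\int_{t_0}^{T}\gR_{sb,\pm1,\theta^*}^2\,dt\big)^{1/2}$, which is exactly the square-root structure appearing with $C_{1b}$ in \eqref{eq:int_burger_bound_th_1}. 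Finally, the integral form of Grönwall's inequality gives $E(t) \le e^{C(t-t_0)}\big(E(t_0) + \int_{t_0}^{t} S\big)$ with $E(t_0) = \int_{-1}^{1}\gR_{tb,\theta^*}^2\,dx$; integrating over $[t_0,T]$ and using
\[
\int_{t_0}^{T} e^{C(t-t_0)}\,dt = \frac{e^{C}-1}{C} \le 1 + Ce^{C}
\]
yields the prefactor $[1 + Ce^{C}]$ multiplying the assembled residual terms, reproducing the stated bound for $\gE_G^2 = \int_{t_0}^{T} E(t)\,dt$.

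I expect the main obstacle to be the careful handling of the boundary terms rather than the bulk energy estimate. Near blow-up the true traces $u(\pm1,t) = \pm\tfrac{1}{1-t}$ grow, the cubic trace term $[\hat{u}^3]$ is sign-indefinite, and splitting the combined boundary flux so that some pieces land as the quadratic $C_{2b}$ contributions and others as the square-root $C_{1b}$ contributions is exactly where the precise constants in \eqref{eq:int_burger_bound__th_2} are pinned down; this is the step requiring the most care. (I would also note that the first-power appearance of $\gR_{tb,\theta^*}$ in \eqref{eq:int_burger_bound_th_1} is most naturally read as the squared initial energy $\int_{-1}^{1}\gR_{tb,\theta^*}^2\,dx = E(t_0)$, which is what the argument produces.)
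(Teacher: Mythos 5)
Your proposal is correct and follows essentially the same route as the paper's proof: an $L^2$ energy estimate for $\hat u = u^*-u$, control of the bulk term by $\norm{u_x}_{L^\infty}$, Young's inequality on the interior residual, a boundary-flux estimate split into a quadratic-in-residual piece (giving $C_{2b}$) and a linear piece handled by Cauchy--Schwarz in time over the length-one interval (giving $C_{1b}$), then Gr\"onwall and a final time integration producing the $[1+Ce^{C}]$ prefactor. The paper merely packages the integration by parts through the entropy pair $S=\tfrac{1}{2}\hat u^2$, $H=\gQ(u^*)-\gQ(u)-u(f(u^*)-f(u))$ and a Taylor expansion of the flux, which is algebraically identical to your direct computation (your boundary bracket $u\hat u^2+\tfrac{2}{3}\hat u^3$ is exactly $2H$), and you are also right that the term $\int_{-1}^{1}\gR_{tb,\theta^*}\,dx$ in the statement is produced by the argument as the squared initial energy $\int_{-1}^{1}\gR_{tb,\theta^*}^2\,dx$.
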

The theorem above has been proved in \ref{proof:int_burger_gen_error_bound}. Note that the RHS of \eqref{eq:int_burger_bound_th_1} is evaluable without exactly knowing the exact true solution $u$ -- the constants in \eqref{eq:int_burger_bound_th_1} only require some knowledge of the supremum value of $u$ at the spatial boundaries and the behaviour of the first order partial derivatives of $u$. 

Note that the most natural PINN risk function we can minimize (and what will be used in the experiments) is,
\begin{align}\label{eq:1drisk}
\gR_{1+1} \coloneqq \mathbb{E}[\abs{\gR_{int,\theta}(x,t)}^2] + \mathbb{E}[\gR_{tb,\theta}^2] + \mathbb{E}[\gR_{sb,-1,\theta}^2] + \mathbb{E}[\gR_{sb,1,\theta}^2]
\end{align}
The expectations in above are understood to be over separate distributions as appropriate for each of the terms. In light of this, most importantly, Theorem~\ref{th:int_burger_gen_error_bound} shows that despite the setting here being of proximity to finite-time blow-up, the naturally motivated PINN risk as stated above is ``$(L_2,L_2,L_2,L_2)$-stable'' \footnote{Suppose $Z_1, Z_2, Z_3, Z_4$ are four Banach spaces, a PDE defined by \plaineqref{eq:pinnreview.pde} is $Z_1, Z_2, Z_3, Z_4$-stable, if $\norm{\vu_\theta(x,t) - \vu(x,t)}_{Z4} = \gO(\norm{\partial_t \vu_\theta + \gN_\vx[\vu_\theta(\vx,t)]}_{Z_1} + \norm{\vu_\theta(\vx,0) - h(\vx)}_{Z_2} + \norm{\vu_\theta(\vx,t) - g(\vx,t)}_{Z_3})$ as $\norm{\partial_t \vu_\theta + \gN_\vx[\vu_\theta(\vx,t)]}_{Z_1}, \norm{\vu_\theta(\vx,0) - h(\vx)}_{Z_2}, \norm{\vu_\theta(\vx,t) - g(\vx,t)}_{Z_3} \rightarrow 0$ for any $\vu_\theta$} in the precise sense as defined in \cite{wang2022stability}. This stability property being true implies that if the PINN risk of the solution obtained is measured to be $\gO(\epsilon)$ then it would directly imply that the $L_2$-risk with respect to the true solution (\ref{eq:gen_error}) is also $\gO(\epsilon)$. And this would be determinable {\em without having to know the true solution at test time.}


In \ref{appendix:subsec:burger_gen_error_bound} we apply quadrature rules on \plaineqref{eq:int_burger_bound_th_1} and show a version of the above bound which makes the sample size dependency of the bound more explicit. 







\subsection{Experiments}
Our experiments are designed to demonstrate the efficacy of the generalization error bounds that we presented above. The novelty of our experimental setup can be seen in the light of the brief overview we give below of how demonstrations of deep-learning generalization bounds have been done in the recent past. 

In the thought-provoking paper \cite{dan2017computing} the authors computed their bounds for $2-$layer neural nets at various widths to show the non-vacuous nature of their bounds. However, these bounds do not apply to any single neural net but to an expected neural net sampled from a specified distribution. Inspired by these experiments, works like \cite{neyshabur2017pac} and \cite{mukherjee2020study} perform a de-randomized PAC-Bayes analysis on the generalization error of neural nets - which can be evaluated on any given net.  

In works such as \cite{behnam2018towards} we see a bound based on Rademacher analysis of the generalization error and the experiments were performed for depth-2 nets at different widths to show the decreasing nature of their bound with increasing width -- a very rare property to be true for uniform convergence based bounds. It is important to point out that the training data is kept fixed while changing the width of the neural net in the setups in \cite{dan2017computing} and \cite{behnam2018towards}.

In \cite{behnam_arora2018} the authors instantiated a way to do compression of nets and computed the bounds on a compressed version of the original net. More recently in \cite{ramachandran2023_generror} the authors incorporated the sparsity of a neural net alongside the PAC-Bayes analysis to get a better bound for the generalization error. In their experiments, they vary the data size while keeping the neural net fixed and fortuitously the bound becomes non-vacuous for a certain width of the net. 

In this work, we investigate if theory can capture the performance of PINNs near a finite-time blow-up and if larger neural nets can better capture the nature of generalization error close to the blow-up. To this end, in contrast to the previous literature cited above, we keep the neural net fixed and vary the domain of the PDE. More specifically, progressively we choose time-domains arbitrarily close to the finite-time blow-up and test the theory at that difficult edge.  

\subsubsection{The Finite-Time Blow-Up Case of (1+1)-Dimensional Burgers' PDE from Section \ref{subsec:1d-burgers}}\label{sec:exp.burgers}
The neural networks we use here have a depth of $6$ layers, and we experiment at two distinct uniform widths of $30$ and $300$ neurons and the training loss is the empirical form of equation \ref{eq:1drisk}. For training, we use full-batch Adam optimizer for $100,000$ iterations and a learning rate of $10^{-4}$. We subsequently select the model with the lowest training error for further analysis. In Figure~\ref{fig:plot3d_Burgers}  the plots have been shown for the predicted and actual solution, for neural nets solving equation \ref{eq:burger_pdes:6} at different values of the $\delta$ parameter and it is clear that the visual resemblance of the neurally derived solution persists even quite close to the blowup at $\delta = 1$.

\begin{figure}[htbp!]
\centering
    \begin{subfigure}[h]{0.83\textwidth}
        \includegraphics[width=\textwidth]{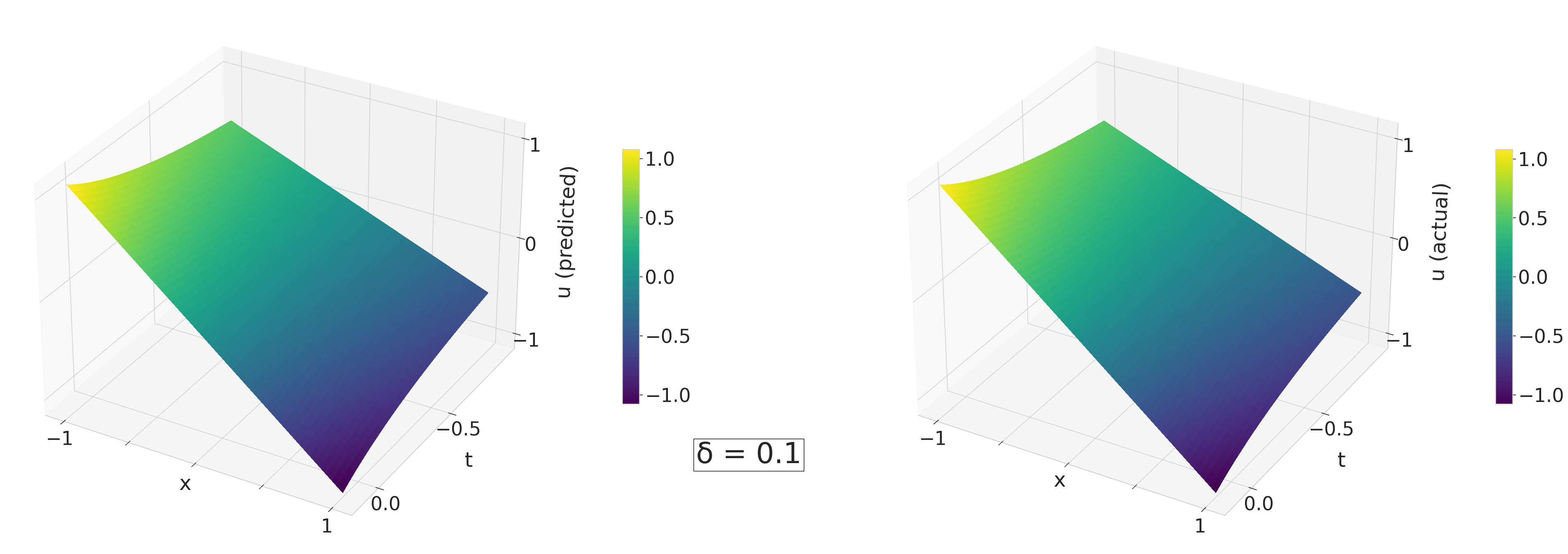}
    \label{fig:plot3d_RHSvsLHS_width300_delta0.1}
    \end{subfigure}
    \hfill
    \begin{subfigure}[h]{0.83\textwidth}
        \includegraphics[width=\textwidth]{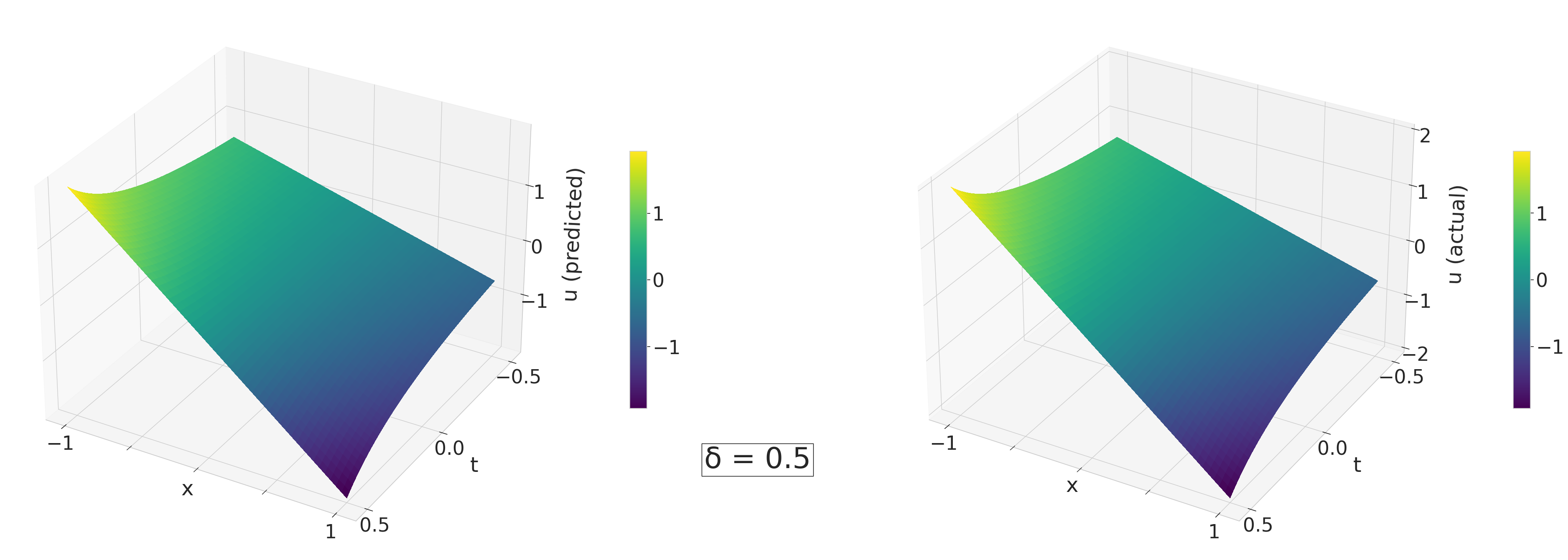}
    \label{fig:plot3d_RHSvsLHS_width300_delta0.5}
    \end{subfigure}
    \hfill
    \begin{subfigure}[h]{0.83\textwidth}
        \includegraphics[width=\textwidth]{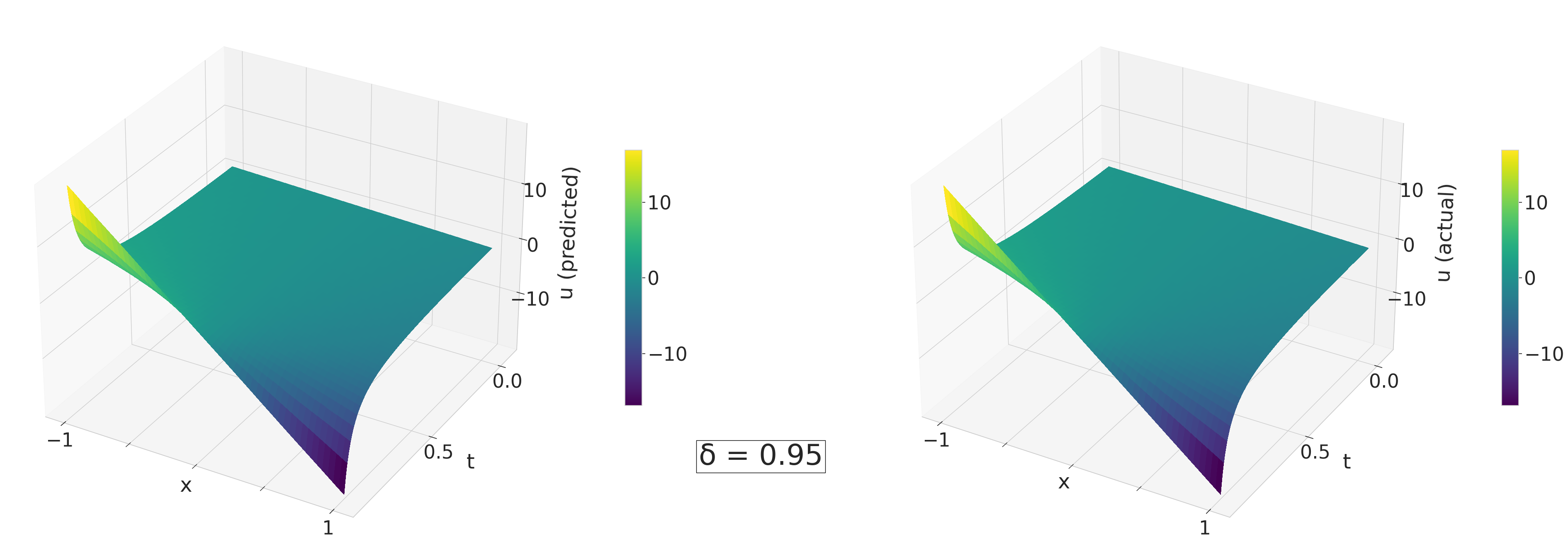}
    \label{fig:plot3d_RHSvsLHS_width300_delta0.95}
    \end{subfigure}
    \hfill
    \begin{subfigure}[h]{0.83\textwidth}
        \includegraphics[width=\textwidth]{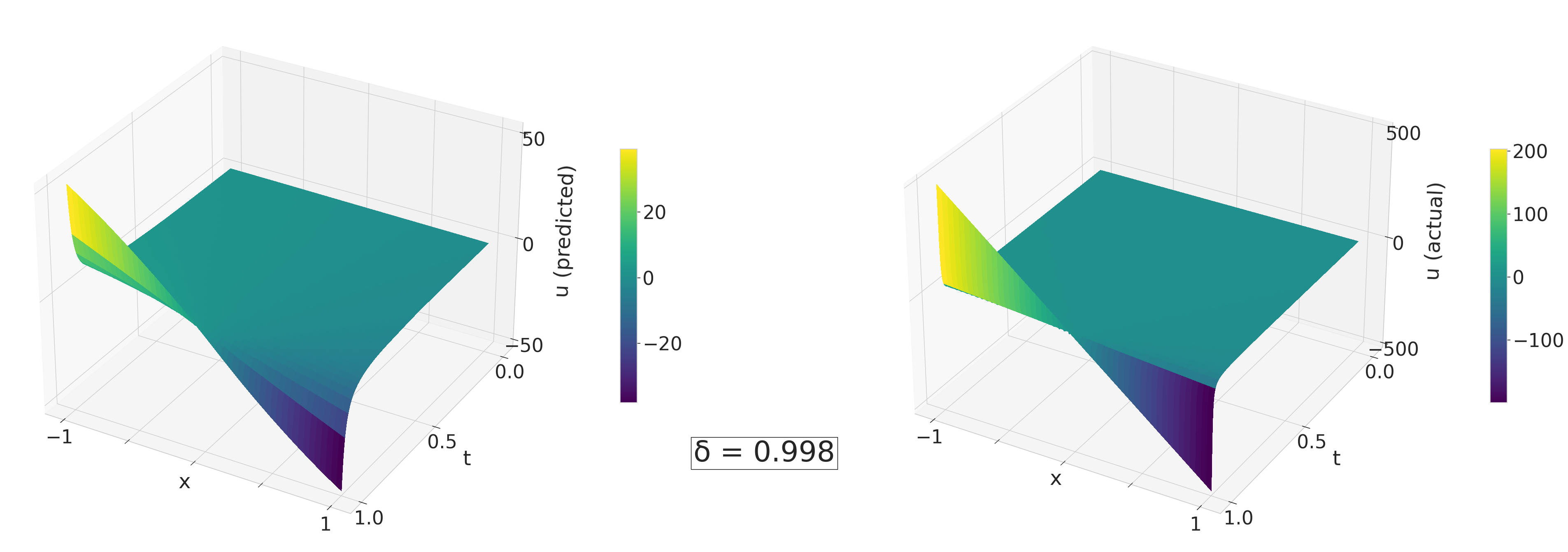}
    \label{fig:plot3d_RHSvsLHS_width300_delta0.998}
    \end{subfigure}
    \vspace{-1.00em}
    \caption{\small A demonstration of the visual resemblance between the neurally derived solution for equation \ref{eq:burger_pdes:6} (left) and the true solution (right) at different values of the $\delta$ parameter getting close to the PDE with blow-up at $\delta=1$. A PINN with a width of $300$ and a depth of $6$ was trained to generate the plots on the left.}
    \label{fig:plot3d_Burgers}
\end{figure}

\begin{figure}[htbp!]
\centering
    \begin{subfigure}[h]{0.49\textwidth}
        \includegraphics[width=\textwidth]{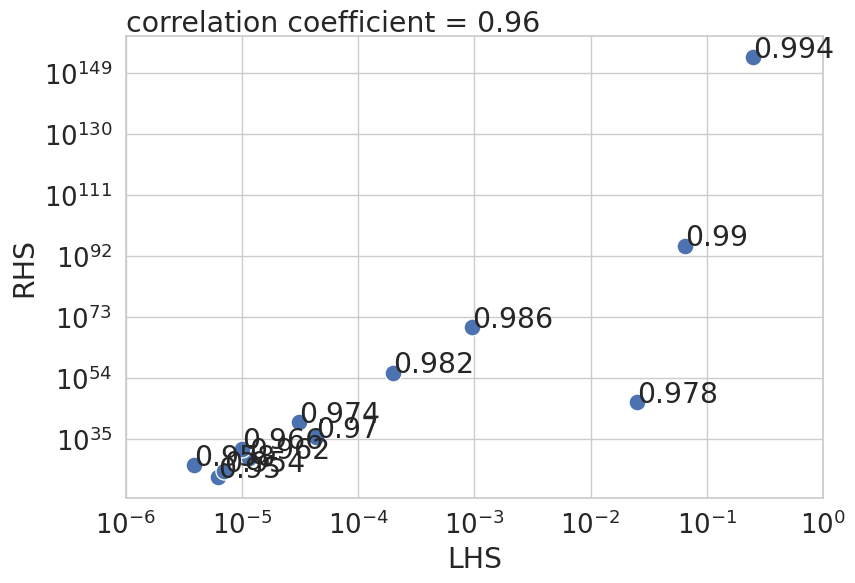}
        \vspace{-1.40em}
        \caption{width=30}
        \label{fig:plot_RHSvsLHS_width30}
    \end{subfigure}
    \hfill
    \begin{subfigure}[h]{0.49\textwidth}
        \includegraphics[width=\textwidth]{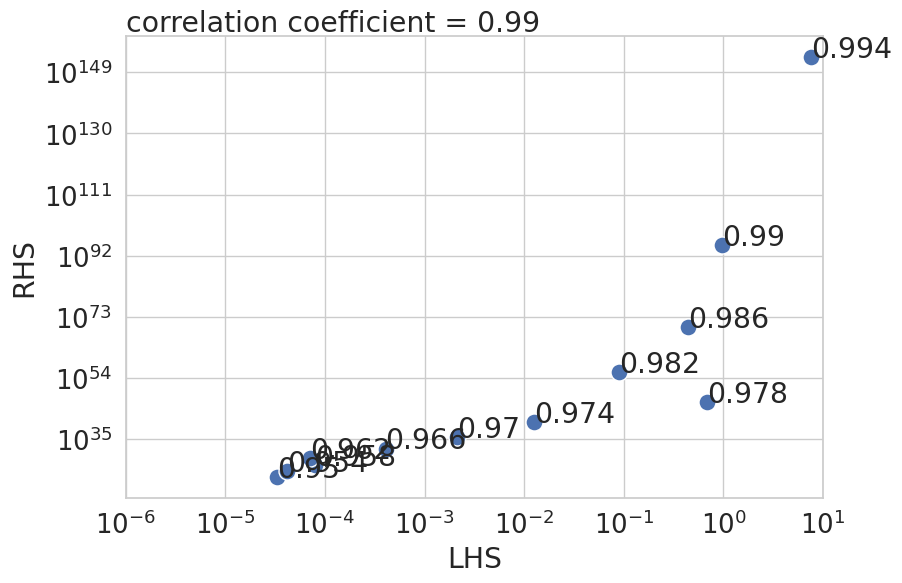}
        \vspace{-1.40em}
        \caption{width=300}
        \label{fig:plot_RHSvsLHS_width300}
    \end{subfigure}
    \vspace{-0.90em}
    \caption{\small Demonstration of the presence of high correlation between the LHS (the true risk) and the RHS (and the derived bound) of equation (\ref{eq:int_burger_bound_th_1}) in Theorem \ref{th:int_burger_gen_error_bound} over PDE setups increasingly close to the singularity. Each experiment is labeled with the value of $\delta$ in the setup of equation \ref{eq:burger_pdes:6} that it corresponds to.}
    \vspace{-1.50em}
    \label{fig:plot_Burgers_loss}
\end{figure}

In \twofigref{fig:plot_RHSvsLHS_width30}{fig:plot_RHSvsLHS_width300} we see that the LHS and the RHS of equation \ref{eq:int_burger_bound_th_1} measured on the trained models is such that the correlation is very high ($\sim 1$)  over multiple values of the proximity parameter -- up to being very close to the blow-up point. We also note that the correlation increases with the width of the neural net, a desirable phenomenon that our bound does capture -- albeit implicitly. 

In Figure \ref{fig:plot_Burgers_varywidth} in the appendix, we illustrate that the upper-bound derived in Theorem \ref{th:int_burger_gen_error_bound} does indeed fall over a reasonable range of widths at a fixed $\delta$.  The mean and the standard deviations plotted therein are obtained over six iterations of the experiment at different random seeds. In Figure \ref{fig:plot_time_burgers} in the appendix, we illustrate the time needed to train the PINN for the $1+1$ Burgers' PDE across a range of $\delta$, representing its proximity to the blow-up. It is evident from the figure that the training time remains approximately constant across all values of $\delta$.

\subsubsection{Testing Against a (2+1)-Dimensional Exact Burgers' Solution with Finite-Time Blow-Up}

From \cite{biazar2009exact} we know that there is an exact finite-time blow-up solution for Burgers' PDE in equation \ref{ndburgers.2} for the case of $d=2$, 
\[ u_1 = \frac{x_1+x_2-2x_1t}{1-2t^2} , ~u_2 = \frac{x_1-x_2-2x_2t}{1-2t^2} \]
where $u_i$ denotes the $i^{\rm th}$ component of the velocity being solved for. Note that at $t=0$, both the above velocities are smooth while they eventually develop singularities at $t=\frac{1}{\sqrt{2}}$ - as is the expected hallmark of non-trivial finite-time blow-up solutions of PDEs. Also note that this singularity is more difficult to solve for since it is blowing up as $\gO(\frac{1}{t^2})$ as compared to the $\gO(\frac{1}{t})$ blow-up in the previous section in one dimension.

We set ourselves to the task of solving for this on a sequence of computational domains $x_1, x_2\in [0,1]$ and $t \in [-\frac{1}{\sqrt{2}} + \delta, \delta]$ where $ \delta \in [0,\frac{1}{\sqrt{2}})$. Hence we have a sequence of PDEs to solve for -- parameterized by $\delta$ and larger $\delta$s getting close to the blow-up. Let ${g}_{x_1,0}(x_2, t)$ and ${g}_{x_1,1}(x_2, t)$ be the boundary conditions for $u_1$ at $x_1 = 0 ~\& ~1$. Let ${g}_{x_2,0}(x_1, t)$ and ${g}_{x_2,1}(x_1, t)$ be the boundary conditions for $u_2$ at $x_2=0 ~\& ~1$ and $u_{1,t_0}$ and $u_{2,t_0}$ with $ t_0 = -\frac{1}{\sqrt{2}} + \delta$ be the initial conditions for the two components of the velocity field. Hence the PDE we seek to solve is,
\begin{equation}
\label{burger2.inviscid}
\small
\left\{\begin{array}{l}
\vu_{t} + (\vu \cdot \nabla) \vu=0
\end{array}\right.,
\left\{\begin{array}{l}
u_{1,t_0}= \frac{(1+\sqrt{2}-2\delta)x_1 + x_2}{2\delta(\sqrt{2}-\delta)} \\
u_{2,t_0}= \frac{x_1-(1-\sqrt{2}+2\delta)x_2}{2\delta(\sqrt{2}-\delta)}
\end{array}\right.,
\left\{\begin{array}{l}
{g}_{x_1,0}(x_2, t) \coloneqq u_1(x_1=0) =\frac{x_2}{1-2\cdot t^2}\\
{g}_{x_1,1}(x_2, t) \coloneqq u_1(x_1=1)  =\frac{1+x_2-2\cdot t}{1-2\cdot t^2}\\
{g}_{x_2,0}(x_1, t) \coloneqq u_2(x_2=0)   =\frac{x_1}{1-2\cdot t^2}\\
{g}_{x_2,1}(x_1, t) \coloneqq u_2(x_2=1) =\frac{x_1-1-2\cdot t}{1-2\cdot t^2}
\end{array}\right.
\end{equation}
Let $\gN:\R^3 \rightarrow \R^2$ be the neural net to be trained, with output coordinates labeled as $({\gN_{u_1}},{\gN_{u_2}})$. Using this net we define the neural surrogates for solving the above PDE as,
\begin{align*}
    u_{1,\theta} \coloneqq {\gN_{u_1}}(x_1, x_2, t) ~\ u_{2,\theta} \coloneqq {\gN_{u_2}}(x_1,x_2, t)
\end{align*}
Correspondingly we define the PDE population risk, ${\gR}_{pde}$ as, 
\begin{equation}
\label{Burger2.loss1}
\begin{aligned}
{\gR}_{pde} &= \left\| \partial_t \vu_{\theta} + \vu_\theta \cdot \nabla \vu_\theta \right\|_{[0,1]^2 \times[-\frac{1}{\sqrt{2}} + \delta, \delta], \nu_{1}}^{2} \
\end{aligned}
\end{equation}
In the above $\vu_\theta = (u_{1,\theta},u_{2,\theta})$ and $\nu_1$ is a measure on the whose space-time domain $[0,1]^2 \times [-\frac{1}{\sqrt{2}} + \delta, \delta]$. Corresponding to a measure $\nu_2$ on $[0,1] \times [-\frac{1}{\sqrt{2}} + \delta, \delta]$ (first interval being space and the later being time), we define ${\gR}_{s,0}$  and $ {\gR}_{s,1} $ corresponding to violation of the boundary conditions,
\begin{equation}
\label{Burger2.loss3}
\begin{aligned}
{\gR}_{s,0} &= \left\| {u_{1,\theta}} - {g}_{x_1,0}(x_2,t)\right\|_{\{0\} \times[0,1]\times[-\frac{1}{\sqrt{2}} + \delta, \delta], \nu_{2}}^{2} + \left\|{u_{2,\theta}} - {g}_{x_2,0}(x_1,t)\right\|_{[0,1]\times\{0\} \times[-\frac{1}{\sqrt{2}} + \delta, \delta], \nu_{2}}^{2} 
\\
{\gR}_{s,1} &= \left\| {u_{1,\theta}} - {g}_{x_1,1}(x_2,t)\right\|_{\{1\} \times[0,1]\times[-\frac{1}{\sqrt{2}} + \delta, \delta], \nu_{2}}^{2} + \left\|{u_{2,\theta}} - {g}_{x_2,1}(x_1,t)\right\|_{[0,1]\times\{1\} \times[-\frac{1}{\sqrt{2}} + \delta, \delta], \nu_{2}}^{2}
\end{aligned}
\end{equation}
For a choice of measure $\nu_3$ on the spatial volume $[0,1]^2$ we define ${\gR}_{t} $ corresponding to the violation of initial conditions $\vu_{t_0} = (u_1 (t_0), u_2(t_0))$,
\begin{equation}
\label{Burger2.loss4}
{\gR} _{t} = \left\| \vu_\theta - \vu_{t_0} \right\|_{[0,1]^2 ,t=t_0, \nu_{3}}^{2} \
\end{equation}
Thus the population risk we are looking to minimize is, $
{\gR}_{2+1}  \coloneqq  {\gR} _{pde} + {\gR}_{s,0}+{\gR}_{s,1} + {\gR}_{t}
$

We note that for the exact solution given above the constants in Theorem \ref{th:ndburgers.error_upperbound} evaluate to, 
\begin{align*}
    C_1 &= 2^2 \norm{\nabla \vu_\theta}_{L^\infty(\Omega)}+ 1 + 2^2 \max_{t=-\frac{1}{\sqrt{2}} + \delta, \delta}\left\{\abs{\frac{1-2t}{1-2t^2}} + \abs{\frac{1}{1-2t^2}}, \abs{\frac{1}{1-2t^2}} + \abs{\frac{1+2t}{1-2t^2}} \right\} \\
    C_2 &= \int_D \norm{\gR_t}_2^2 \dd{\vx} + \int_{\Tilde{\Omega}} \norm{\gR_{pde}}_2^2  \dd{\vx} \dd{t} + 2^2 \norm{\nabla \vu_\theta}_{L^\infty(\Omega)} \int_{\Tilde{\Omega}} \norm{\vu_\theta}_2^2 \dd{\vx} \dd{t} \\
    &+ 2^2 \max_{t=-\frac{1}{\sqrt{2}} + \delta, \delta}\left\{\abs{\frac{1-2t}{1-2t^2}} + \abs{\frac{1}{1-2t^2}}, \abs{\frac{1}{1-2t^2}} + \abs{\frac{1+2t}{1-2t^2}} \right\} \int_{\Tilde{\Omega}} \norm{\vu}_2^2 \dd{\vx} \dd{t}  \\
     &= \int_D \norm{\gR_t}_2^2 \dd{\vx} + \int_{\Tilde{\Omega}} \norm{\gR_{pde}}_2^2  \dd{\vx} \dd{t} + 2^2 \norm{\nabla \vu_\theta}_{L^\infty(\Omega)} \int_{\Tilde{\Omega}} \norm{\vu_\theta}_2^2 \dd{\vx} \dd{t} \\
    &+ 2^2 \max_{t=-\frac{1}{\sqrt{2}} + \delta, \delta}\left\{\abs{\frac{1-2t}{1-2t^2}} + \abs{\frac{1}{1-2t^2}}, \abs{\frac{1}{1-2t^2}} + \abs{\frac{1+2t}{1-2t^2}} \right\} \left[\left.\left[\frac{11t - 7}{12(1-2t^2)} + \frac{5t + 1}{12 (1-2t^2)}\right]\right|_{t=\delta}\right. \\
    &- \left.\left.\left[\frac{11t - 7}{12(1-2t^2)} + \frac{5t + 1}{12 (1-2t^2)}\right]\right|_{t=-\frac{1}{\sqrt{2}}+\delta}\right]
\end{align*}

\begin{figure}[!h]
    \centering
    \begin{subfigure}[h]{0.49\textwidth}
        \includegraphics[width=\textwidth]{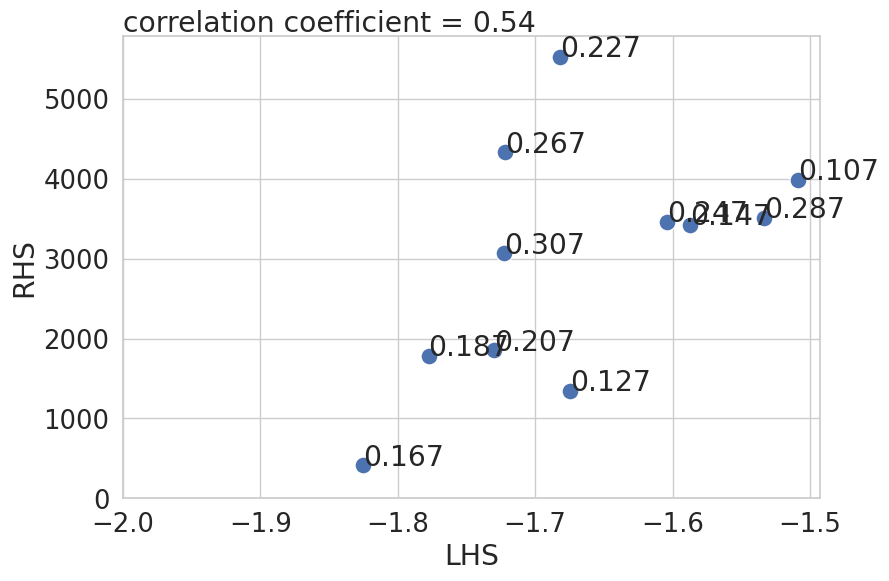}
        \caption{width=30}
        \label{fig:plot_RHSvsLHS_width30_first11}
    \end{subfigure}
    \hfill
    \begin{subfigure}[h]{0.49\textwidth}
        \includegraphics[width=\textwidth]{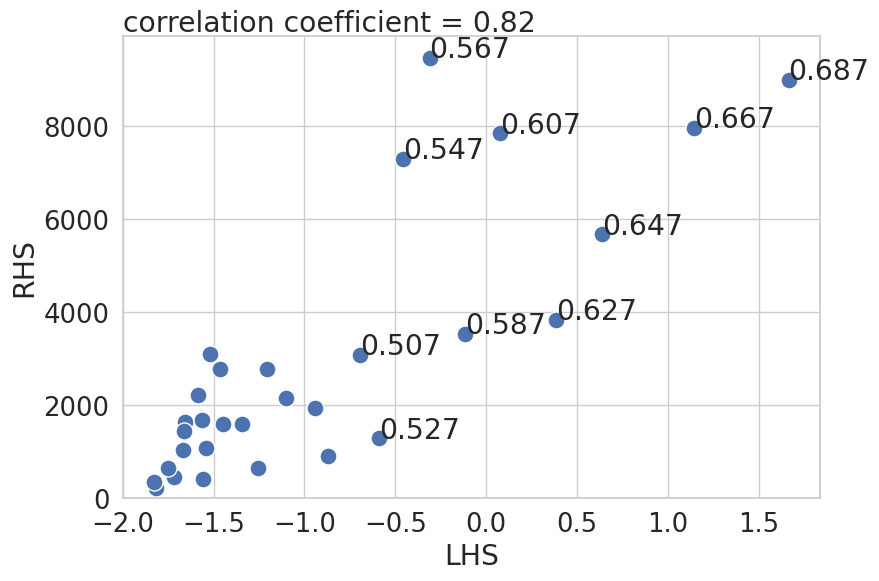}
        \caption{width=100}
        \label{fig:plot_RHSvsLHS_width100}
    \end{subfigure}
    \vspace{-0.90em}
    \caption{\small These plots show the behaviour of LHS (the true risk) and RHS (the derived bound) of equation (\ref{ndburgers.upperbound}) in Theorem \ref{th:ndburgers.error_upperbound} for different values of the $\delta$ parameter that quantifies proximity to the blow-up point. In the left plot each point is marked with the value of the $\delta$ at which the experiment is done and the right figure, for clarity, this is marked only for experiments at $\delta > \frac{1}{2}$.}
    \vspace{-1.50em}
    \label{fig:plot_ndburgers}
\end{figure}

In Figure \ref{fig:plot_ndburgers} we see the true risk and the derived bound in Theorem \ref{th:ndburgers.error_upperbound} for depth $6$ neural nets obtained by training on the above loss. The experiments show that the insight from the previous demonstration continues to hold and more vividly so. Here, for the experiments at low width ($30$) the correlation stays around $0.50$ and only until $\delta = 0.307$, and beyond that it decreases rapidly. However, for experiments at width $100$ the correlation remains close to $0.80$ for $\delta$ much closer to the blow-up at $t = \frac{1}{\sqrt{2}}$.

\section{Conclusion}

In this work we have taken some of the first-of-its kind steps to initiate research into understanding the ability of neural nets to solve PDEs at the edge of finite-time blow-up. Our work suggests a number of exciting directions of future research. Firstly, more sophisticated modifications to the PINN formalism could be found to solve PDEs specifically near finite-time blow-ups. 

Secondly, we note that it remains an open question to establish if there is any PINN risk for the $d+1$-dimensional Burgers, for $d>1$, that is stable by the condition stated in \cite{wang2022stability}, as was shown to be true in our $1+1$-dimensional Burgers' in Theorem~\ref{th:int_burger_gen_error_bound}.

In \cite{luo-hou_pnas} the authors had given numerical studies to suggest that 3D incompressible Euler PDEs can develop finite-time singularities from smooth initial conditions for the fluid velocity. For their setup of axisymmetric fluid flow they conjectured a simplified model for the resultant flow near the outer boundary of the cylinder. Self-similar finite-time blow-ups for this model's solutions were rigorously established in \cite{chen2022asymptotically} - and it was shown that an estimate of its blowup-exponent is very close to the measured values of the 3D Euler PDE. 

In the seminal paper \cite{elgindi2021euler} it was shown that the unique local solution to 3D incompressible Euler PDEs can develop finite-time singularities despite starting from a divergence-free and odd initial velocity in $\mathbb{C}^{1,\alpha}$ and initial vorticity bounded as $\sim \frac{1}{1+\norm{x}^\alpha}$. This breakthrough was built upon to prove the existence of finite time singularity in 2D Boussinesq PDE in \cite{chen2021finite}.  

In \cite{Luo-Hou} it was highlighted that there is an association between blow-ups in 3D Euler and 2D Boussinesq PDEs. In \cite{wang2022asymptotic}, the authors investigated the ability for PINNs to detect the occurrence of self-similar blow-ups in 2D Boussinesq PDE. A critical feature of this experiment was its use of the unconventional regularizer on the gradients of the neural surrogate with respect to its inputs. In light of this, we posit that a very interesting direction of research would be to investigate if a theoretical analysis of the risk bound for such losses can be used as a method of detection of the blow-up. 


\clearpage 
\section*{References}
\bibliographystyle{iopart-num}
\bibliography{references}

\providecommand{\newblock}{}
\begin{thebibliography}{10}
\expandafter\ifx\csname url\endcsname\relax
  \def\url#1{{\tt #1}}\fi
\expandafter\ifx\csname urlprefix\endcsname\relax\def\urlprefix{URL }\fi
\providecommand{\eprint}[2][]{\url{#2}}

\bibitem{jameson2002using}
Jameson A, Martinelli L and Vassberg J 2002 Using computational fluid dynamics for aerodynamics--a critical assessment {\em Proceedings of ICAS\/} pp 2002--1

\bibitem{Lagaris_1998}
Lagaris I, Likas A and Fotiadis D 1998 {\em {IEEE} Transactions on Neural Networks\/} {\bf 9} 987--1000 \urlprefix\url{https://doi.org/10.1109%2F72.712178}

\bibitem{broomhead1988RBF}
Broomhead D and Lowe D 1988 {\em ROYAL SIGNALS AND RADAR ESTABLISHMENT MALVERN (UNITED KINGDOM)\/} {\bf RSRE-MEMO-4148}

\bibitem{karniadakis2021piml}
Karniadakis, GE and Kevrekidis 2021 Physics-informed machine learning

\bibitem{E_2021}
E W, Han J and Jentzen A 2021 {\em Nonlinearity\/} {\bf 35} 278--310 \urlprefix\url{https://doi.org/10.1088%2F1361-6544%2Fac337f}

\bibitem{raissi2019physics}
Raissi M, Perdikaris P and Karniadakis G~E 2019 {\em Journal of Computational physics\/} {\bf 378} 686--707

\bibitem{lawal2022prototype}
Lawal Z, Yassin H, Lai D~T~C and Idris A 2022 {\em Big Data and Cognitive Computing\/}

\bibitem{yu2018deepdrm}
Yu B {\em et~al.\/} 2018 {\em Communications in Mathematics and Statistics\/} {\bf 6} 1--12

\bibitem{sirignano2018dgm}
Sirignano J and Spiliopoulos K 2018 {\em Journal of computational physics\/} {\bf 375} 1339--1364

\bibitem{kaiser2021datadriven}
Kaiser E, Kutz J~N and Brunton S~L 2021 Data-driven discovery of koopman eigenfunctions for control (\textit{Preprint} \eprint{1707.01146})

\bibitem{erichson2019physicsinformed}
Erichson N~B, Muehlebach M and Mahoney M~W 2019 Physics-informed autoencoders for lyapunov-stable fluid flow prediction (\textit{Preprint} \eprint{1905.10866})

\bibitem{Wandel_2021}
Wandel N, Weinmann M and Klein R 2021 {\em Physics of Fluids\/} {\bf 33} 047117 \urlprefix\url{https://doi.org/10.1063%2F5.0047428}

\bibitem{li2022learning}
Li Z, Liu-Schiaffini M, Kovachki N, Liu B, Azizzadenesheli K, Bhattacharya K, Stuart A and Anandkumar A 2022 Learning dissipative dynamics in chaotic systems (\textit{Preprint} \eprint{2106.06898})

\bibitem{salvi2022neural}
Salvi C, Lemercier M and Gerasimovics A 2022 Neural stochastic pdes: Resolution-invariant learning of continuous spatiotemporal dynamics (\textit{Preprint} \eprint{2110.10249})

\bibitem{Lu_2021}
Lu L, Jin P, Pang G, Zhang Z and Karniadakis G~E 2021 {\em Nature Machine Intelligence\/} {\bf 3} 218--229 \urlprefix\url{https://doi.org/10.1038%2Fs42256-021-00302-5}

\bibitem{LU2022114778}
Lu L, Meng X, Cai S, Mao Z, Goswami S, Zhang Z and Karniadakis G~E 2022 {\em Computer Methods in Applied Mechanics and Engineering\/} {\bf 393} 114778 ISSN 0045-7825 \urlprefix\url{https://www.sciencedirect.com/science/article/pii/S0045782522001207}

\bibitem{wang2021pdeOnet}
Wang S, Wang H and Perdikaris P 2021 {\em Science Advances\/} {\bf 7} eabi8605 (\textit{Preprint} \eprint{https://www.science.org/doi/pdf/10.1126/sciadv.abi8605}) \urlprefix\url{https://www.science.org/doi/abs/10.1126/sciadv.abi8605}

\bibitem{mishra2023convolutional}
Raoni{\'c} B, Molinaro R, Rohner T, Mishra S and de~Bezenac E 2023 {\em arXiv preprint arXiv:2302.01178\/}

\bibitem{ARTHURS2021110364}
Arthurs C~J and King A~P 2021 {\em Journal of Computational Physics\/} {\bf 438} 110364 ISSN 0021-9991 \urlprefix\url{https://www.sciencedirect.com/science/article/pii/S002199912100259X}

\bibitem{wang2020physicsinformed}
Wang R, Kashinath K, Mustafa M, Albert A and Yu R 2020 Towards physics-informed deep learning for turbulent flow prediction (\textit{Preprint} \eprint{1911.08655})

\bibitem{Eivazi_2022}
Eivazi H, Tahani M, Schlatter P and Vinuesa R 2022 {\em Physics of Fluids\/} {\bf 34} 075117 \urlprefix\url{https://doi.org/10.1063%2F5.0095270}

\bibitem{wang2022asymptotic}
Wang Y, Lai C~Y, Gómez-Serrano J and Buckmaster T 2022 Asymptotic self-similar blow up profile for 3-d euler via physics-informed neural networks (\textit{Preprint} \eprint{2201.06780})

\bibitem{hu2022XPINNs}
Hu Z, Jagtap A~D, Karniadakis G~E and Kawaguchi K 2022 {\em {SIAM} Journal on Scientific Computing\/} {\bf 44} A3158--A3182 \urlprefix\url{https://doi.org/10.1137%2F21m1447039}

\bibitem{siddhartha2022generror}
Mishra S and Molinaro R 2022 {\em IMA Journal of Numerical Analysis\/} {\bf 43} 1--43 ISSN 0272-4979 (\textit{Preprint} \eprint{https://academic.oup.com/imajna/article-pdf/43/1/1/49059512/drab093.pdf}) \urlprefix\url{https://doi.org/10.1093/imanum/drab093}

\bibitem{de2022error}
De~Ryck T, Jagtap A~D and Mishra S 2022 {\em arXiv preprint arXiv:2203.09346\/}

\bibitem{krishnapriyan2021characterizing}
Krishnapriyan A, Gholami A, Zhe S, Kirby R and Mahoney M~W 2021 {\em Advances in Neural Information Processing Systems\/} {\bf 34} 26548--26560

\bibitem{wangperdiakaris2021failure}
Wang S, Teng Y and Perdikaris P 2021 {\em SIAM Journal on Scientific Computing\/} {\bf 43} A3055--A3081 (\textit{Preprint} \eprint{https://doi.org/10.1137/20M1318043}) \urlprefix\url{https://doi.org/10.1137/20M1318043}

\bibitem{rohrhofer2022role}
Rohrhofer F~M, Posch S, G{\"o}{\ss}nitzer C and Geiger B~C 2022 {\em arXiv preprint arXiv:2203.13648\/}

\bibitem{wong2022pinn}
Cheng~Wong J, Ooi C, Gupta A and Ong Y~S 2022 {\em IEEE Transactions on Artificial Intelligence\/}  1--15

\bibitem{wang2022respecting}
Wang S, Sankaran S and Perdikaris P 2022 {\em arXiv preprint arXiv:2203.07404\/}

\bibitem{wintner1945existence}
Wintner A 1945 {\em American Journal of Mathematics\/} {\bf 67} 277--284 ISSN 00029327, 10806377 \urlprefix\url{http://www.jstor.org/stable/2371729}

\bibitem{cooke1955existence}
Cooke K~L 1955 {\em Rendiconti del Circolo Matematico di Palermo\/} {\bf 4} 301--308 ISSN 1973-4409 \urlprefix\url{https://doi.org/10.1007/BF02854201}

\bibitem{pazy1983existence}
Pazy A 1983 {\em Some Nonlinear Evolution Equations\/} (New York, NY: Springer New York) pp 183--205 ISBN 978-1-4612-5561-1 \urlprefix\url{https://doi.org/10.1007/978-1-4612-5561-1_6}

\bibitem{lin1996control}
Lin Y, Sontag E~D and Wang Y 1996 {\em SIAM Journal on Control and Optimization\/} {\bf 34} 124--160 (\textit{Preprint} \eprint{https://doi.org/10.1137/S0363012993259981}) \urlprefix\url{https://doi.org/10.1137/S0363012993259981}

\bibitem{stuart_floater_1990}
Stuart A~M and Floater M~S 1990 {\em European Journal of Applied Mathematics\/} {\bf 1} 47–71

\bibitem{fujita1966Ocauchy}
Fujita H 1966 On the blowing up of solutions of the cauchy problem for $u_1+\delta u+u^{1+\alpha}$ \urlprefix\url{https://api.semanticscholar.org/CorpusID:118871869}

\bibitem{fujita1969nonlinear}
Fujita H 1969 {\em Bulletin of the American Mathematical Society\/} {\bf 75} 132 -- 135

\bibitem{miguel1997chemo}
Herrero M~A and Velazquez J~J~L 1997 {\em Annali della Scuola Normale Superiore di Pisa - Classe di Scienze\/} {\bf Ser. 4, 24} 633--683 \urlprefix\url{http://www.numdam.org/item/ASNSP_1997_4_24_4_633_0/}

\bibitem{he2019chemosuppressing}
He S and Tadmor E 2019 {\em Archive for Rational Mechanics and Analysis\/} {\bf 232} 951--986

\bibitem{chen2022asymptotically}
Chen J, Hou T~Y and Huang D 2022 {\em Annals of PDE\/} {\bf 8} 24

\bibitem{tanaka2023chemofinite}
Tanaka Y 2023 {\em arXiv preprint arXiv:2304.13421\/}

\bibitem{bebernes1981thermalmathematical}
Bebernes J and Kassoy D 1981 {\em SIAM Journal on Applied Mathematics\/} {\bf 40} 476--484

\bibitem{lacey1983thermalmathematical}
Lacey A 1983 {\em SIAM Journal on Applied Mathematics\/} {\bf 43} 1350--1366

\bibitem{dold1991thermalasymptotic}
Dold J 1991 {\em Proceedings of the Royal Society of London. Series A: Mathematical and Physical Sciences\/} {\bf 433} 521--545

\bibitem{herrero1993thermalplane}
Herrero M~A and Vel{\'a}zquez J 1993 {\em Israel Journal of Mathematics\/} {\bf 81} 321--341

\bibitem{lacey1995thermal}
Lacey A~A 1995 {\em European Journal of Applied Mathematics\/} {\bf 6} 201–224

\bibitem{tao2016finiteeuler}
Tao T 2016 {\em Annals of PDE\/} {\bf 2} 1--79

\bibitem{tao2016finitenavierstokes}
Tao T 2016 {\em Journal of the American Mathematical Society\/} {\bf 29} 601--674

\bibitem{baydin2018automatic}
Baydin A~G, Pearlmutter B~A, Radul A~A and Siskind J~M 2018 {\em Journal of Marchine Learning Research\/} {\bf 18} 1--43

\bibitem{wang2022stability}
Wang C, Li S, He D and Wang L 2022 {\em Advances in Neural Information Processing Systems\/} {\bf 35} 8278--8290

\bibitem{karniadakis2022xpinn}
Hu Z, Jagtap A~D, Karniadakis G~E and Kawaguchi K 2022 {\em SIAM Journal on Scientific Computing\/} {\bf 44} A3158--A3182 (\textit{Preprint} \eprint{https://doi.org/10.1137/21M1447039}) \urlprefix\url{https://doi.org/10.1137/21M1447039}

\bibitem{dawperdiakaris2023_failure}
Daw A, Bu J, Wang S, Perdikaris P and Karpatne A 2023 Mitigating propagation failures in physics-informed neural networks using retain-resample-release ({R}3) sampling {\em Proceedings of the 40th International Conference on Machine Learning\/} ({\em Proceedings of Machine Learning Research\/} vol 202) ed Krause A, Brunskill E, Cho K, Engelhardt B, Sabato S and Scarlett J (PMLR) pp 7264--7302 \urlprefix\url{https://proceedings.mlr.press/v202/daw23a.html}

\bibitem{daw2022rethinking}
Daw A, Bu J, Wang S, Perdikaris P and Karpatne A 2022 {\em arXiv preprint arXiv:2207.02338\/}

\bibitem{wangperdiakaris2022ntkfailure}
Wang S, Yu X and Perdikaris P 2022 {\em Journal of Computational Physics\/} {\bf 449} 110768 ISSN 0021-9991 \urlprefix\url{https://www.sciencedirect.com/science/article/pii/S002199912100663X}

\bibitem{rahaman2019spectral}
Rahaman N, Baratin A, Arpit D, Draxler F, Lin M, Hamprecht F, Bengio Y and Courville A 2019 On the spectral bias of neural networks {\em International Conference on Machine Learning\/} (PMLR) pp 5301--5310

\bibitem{johnson1987}
Johnson C and Szepessy A 1987 {\em Mathematics of Computation - Math. Comput.\/} {\bf 49} 427--427

\bibitem{tadmur91}
Tadmor E 1991 {\em SIAM Journal on Numerical Analysis\/} {\bf 28} 891--906 (\textit{Preprint} \eprint{https://doi.org/10.1137/0728048}) \urlprefix\url{https://doi.org/10.1137/0728048}

\bibitem{tadmur92}
Nessyahu H and Tadmor E 1992 {\em SIAM Journal on Numerical Analysis\/} {\bf 29} 1505--1519 (\textit{Preprint} \eprint{https://doi.org/10.1137/0729087}) \urlprefix\url{https://doi.org/10.1137/0729087}

\bibitem{dan2017computing}
Dziugaite G~K and Roy D~M 2017 {\em arXiv preprint arXiv:1703.11008\/}

\bibitem{neyshabur2017pac}
Neyshabur B, Bhojanapalli S and Srebro N 2017 {\em arXiv preprint arXiv:1707.09564\/}

\bibitem{mukherjee2020study}
Mukherjee A 2020 {\em A study of the mathematics of deep learning\/} Ph.D. thesis The Johns Hopkins University

\bibitem{behnam2018towards}
Neyshabur B, Li Z, Bhojanapalli S, LeCun Y and Srebro N 2018 {\em arXiv preprint arXiv:1805.12076\/}

\bibitem{behnam_arora2018}
Arora S, Ge R, Neyshabur B and Zhang Y 2018 Stronger generalization bounds for deep nets via a compression approach {\em Proceedings of the 35th International Conference on Machine Learning\/} ({\em Proceedings of Machine Learning Research\/} vol~80) ed Dy J and Krause A (PMLR) pp 254--263 \urlprefix\url{https://proceedings.mlr.press/v80/arora18b.html}

\bibitem{ramachandran2023_generror}
Muthukumar R and Sulam J 2023 Sparsity-aware generalization theory for deep neural networks {\em Proceedings of Thirty Sixth Conference on Learning Theory\/} ({\em Proceedings of Machine Learning Research\/} vol 195) ed Neu G and Rosasco L (PMLR) pp 5311--5342 \urlprefix\url{https://proceedings.mlr.press/v195/muthukumar23a.html}

\bibitem{biazar2009exact}
Biazar J and Aminikhah H 2009 {\em Mathematical and Computer Modelling\/} {\bf 49} 1394--1400

\bibitem{luo-hou_pnas}
Luo G and Hou T~Y 2014 {\em Proceedings of the National Academy of Sciences\/} {\bf 111} 12968--12973 (\textit{Preprint} \eprint{https://www.pnas.org/doi/pdf/10.1073/pnas.1405238111}) \urlprefix\url{https://www.pnas.org/doi/abs/10.1073/pnas.1405238111}

\bibitem{elgindi2021euler}
Elgindi T~M 2021 {\em Annals of Mathematics\/} {\bf 194} 647 -- 727 \urlprefix\url{https://doi.org/10.4007/annals.2021.194.3.2}

\bibitem{chen2021finite}
Chen J and Hou T~Y 2021 {\em Communications in Mathematical Physics\/} {\bf 383} 1559--1667

\bibitem{Luo-Hou}
Luo G and Hou T~Y 2014 {\em Multiscale Modeling \& Simulation\/} {\bf 12} 1722--1776 (\textit{Preprint} \eprint{https://doi.org/10.1137/140966411}) \urlprefix\url{https://doi.org/10.1137/140966411}

\bibitem{devore1984quad}
DeVore R~A and Scott L~R 1984 {\em SIAM Journal on Numerical Analysis\/} {\bf 21} 400--412

\end{thebibliography}

\clearpage 
\renewcommand{\thetheorem}{\Alph{section}\arabic{theorem}}
\renewcommand{\thelemma}{\Alph{section}\arabic{lemma}}
\title[Appendix]{}
\appendix
\section{Proofs for the Main Theorems}
\subsection{Proof of Theorem \ref{th:ndburgers.error_upperbound}}
\label{subsec:ndburgers.proof}
\begin{proof}
Let $\vu$ be the actual solution of the (d+1)-dimensional Burgers' PDE and $\vu_\theta$ the predicted solution by the PINN with parameters $\theta$.
Let's define
\begin{align*}
    f(\vu) \coloneqq \frac{\norm{\vu}_2^2}{2} \\
    \hat \vu \coloneqq \vu_\theta - \vu
\end{align*}
Then we can write the (d+1)-dimenional Burgers' and it's residual as
\begin{align}
\partial_t \vu + (\vu \cdot \nabla) \vu = 0 \label{2dburgers.1}\\
\gR_{\rm{pde}} \coloneqq \partial_t \vu_\theta + (\vu_\theta \cdot \nabla) \vu_\theta \label{2dburgers.rpde}
\end{align}
Now multiplying \plaineqref{2dburgers.rpde} with $\vu_\theta$ on both sides we get
\begin{align}
    \vu_\theta \cdot \gR_{\rm{pde}} = \partial_t f(\vu_\theta) + \vu_\theta \cdot \nabla f(\vu_\theta) \label{2dburgers.urpde}
\end{align}
Similarly, multiplying both sides of \plaineqref{2dburgers.1} with $\vu$ we get
\begin{align}
    \partial_t f(\vu) + \vu \cdot \nabla f(\vu) = 0 \label{2dburgers.u}
\end{align}
Some calculation on \plaineqref{2dburgers.1} and \plaineqref{2dburgers.rpde} yields
\begin{align}
    &\vu \cdot \left[ (\partial_t \vu_\theta + (\vu_\theta \cdot \nabla) \vu_\theta - \gR_{\rm{pde}}) - (\partial_t \vu + (\vu \cdot \nabla) \vu ) \right] = - \hat\vu \cdot \left[ \partial_t \vu + (\vu \cdot \nabla) \vu \right] \nonumber\\
    \implies &\partial_t (\vu \cdot \hat\vu) + \vu \cdot ((\vu_\theta\cdot\nabla)\vu_\theta) - \vu \cdot ((\vu\cdot\nabla)\vu) - \vu \cdot \gR_{\rm{pde}} = - \hat\vu \cdot ((\vu \cdot \nabla) \vu) \nonumber\\
    \implies &\partial_t (\vu \cdot \hat\vu) + \vu \cdot ((\vu_\theta\cdot\nabla)\vu_\theta) - \vu \cdot \nabla f(\vu) - \vu \cdot \gR_{\rm{pde}} = - \hat\vu \cdot ((\vu \cdot \nabla) \vu) \label{2dburgers.2}
\end{align}
Let
\begin{align}
    S &\coloneqq \frac{1}{2} \hat\vu \cdot \hat\vu \\
    \implies \partial_t S &= \partial_t f(\vu_\theta) - \partial_t f(\vu) - \partial_t(\vu \cdot \hat\vu) \nonumber\\
    &= \left[\vu_\theta \cdot \gR_{\rm{pde}} - \vu_\theta \cdot \nabla f(\vu_\theta)\right] + \left[\vu \cdot \nabla f(\vu)\right] \nonumber\\
    &- \left[- \vu \cdot ((\vu_\theta\cdot\nabla)\vu_\theta) + \vu \cdot \nabla f(\vu) + \vu \cdot \gR_{\rm{pde}} - \hat\vu \cdot ((\vu \cdot \nabla) \vu) \right] \nonumber\\
    &= \hat\vu \cdot \gR_{\rm{pde}} - \vu_\theta \cdot \nabla f(\vu_\theta) + \vu \cdot ((\vu_\theta\cdot\nabla)\vu_\theta) + \hat\vu \cdot ((\vu \cdot \nabla) \vu) \nonumber\\
    &= \hat\vu \cdot \gR_{\rm{pde}} + \hat\vu \cdot ((\vu \cdot \nabla) \vu - (\vu_\theta \cdot \nabla) \vu_\theta) \nonumber
\end{align}
here we represent the spatial domain $[0,1]\times[0,1]$ by $D$ and use $\Omega$ to represent the $D \times [-\frac{1}{\sqrt{2}}+\delta,\delta]$. We then define
\begin{align}
    \gT \coloneqq \hat\vu \cdot ((\vu \cdot \nabla) \vu) \\
    \Tilde{H} \coloneqq \hat\vu \cdot ((\vu_\theta \cdot \nabla) \vu_\theta) 
\end{align}
And this leads to,
\begin{align} 
    \partial_t S + \Tilde{H} = \hat\vu \cdot \gR_{\rm{pde}} + \gT
\end{align}
Thus we have the inequalities,
\begin{align}
    \int_D \partial_t \norm{\hat \vu}_2^2 \dd{\vx} &\leq \int_D \norm{\hat \vu}_2^2  \dd{\vx} + \int_D \norm{\gR_{pde}}_2^2  \dd{\vx} \nonumber\\
    &+ 2 \int_D \hat \vu \cdot ((\vu \cdot \nabla)\vu) \dd{\vx} - 2 \int_D \hat \vu \cdot ((\vu_\theta \cdot \nabla) \vu_\theta) \dd{\vx} \nonumber\\ 
    &[\mbox{using Lemma}~\ref{lemma:NS.upperbound2}] \nonumber\\
    &\leq \int_D \norm{\hat \vu}_2^2  \dd{\vx} + \int_D \norm{\gR_{pde}}_2^2  \dd{\vx} \nonumber\\
    &+ 2 d^2 \norm{\nabla \vu}_{L^\infty(\Omega)} \int_D \norm{\vu}_2 \norm{\hat \vu}_2 \dd{\vx} + 2 d^2 \norm{\nabla \vu_\theta}_{L^\infty(\Omega)} \int_D \norm{\vu_\theta}_2 \norm{\hat \vu}_2 \dd{\vx} \nonumber\\ 
   &\leq \int_D \norm{\hat \vu}_2^2  \dd{\vx} + \int_D \norm{\gR_{pde}}_2^2  \dd{\vx} \nonumber\\
    &+ d^2 \norm{\nabla \vu}_{L^\infty(\Omega)} \int_D \left[\norm{\vu}_2^2 + \norm{\hat \vu}_2^2 \right] \dd{\vx} + d^2 \norm{\nabla \vu_\theta}_{L^\infty(\Omega)} \int_D \left[ \norm{\vu_\theta}_2^2 + \norm{\hat \vu}_2^2 \right] \dd{\vx} \nonumber\\ 
    &\leq \left[ 1 + d^2 \norm{\nabla \vu}_{L^\infty(\Omega)} + d^2 \norm{\nabla \vu_\theta}_{L^\infty(\Omega)}\right] \int_D \norm{\hat \vu}_2^2  \dd{\vx} \nonumber\\
    &+ \int_D \norm{\gR_{pde}}_2^2  \dd{\vx} + d^2 \norm{\nabla \vu}_{L^\infty(\Omega)} \int_D \norm{\vu}_2^2 \dd{\vx} + d^2 \norm{\nabla \vu_\theta}_{L^\infty(\Omega)} \int_D \norm{\vu_\theta}_2^2 \dd{\vx} \nonumber\\
   &\leq C_1 \int_D \norm{\hat \vu}_2^2  \dd{\vx} + \int_D \norm{\gR_{pde}}_2^2  \dd{\vx} \nonumber\\
    &+ d^2 \norm{\nabla \vu}_{L^\infty(\Omega)} \int_D \norm{\vu}_2^2 \dd{\vx} + d^2 \norm{\nabla \vu_\theta}_{L^\infty(\Omega)} \int_D \norm{\vu_\theta}_2^2 \dd{\vx} \label{Ns.upperbound.integral1}
\end{align}
where
\begin{align*}
C_1 &= d^2 \norm{\nabla \vu_\theta}_{L^\infty(\Omega)}\\
&+ 1 + d^2 \norm{\nabla \vu}_{L^\infty(\Omega)}
\end{align*}
Lets define the domain $D\times[-\frac{1}{\sqrt{2}} + \delta, \Tilde{\delta}]$ by $\Tilde{\Omega}$ where $\Tilde{\delta} \in [-\frac{1}{\sqrt{2}} + \delta, \delta)$. 

Integrating over $\Tilde{\Omega}$ we get
\begin{align}
    \nonumber \int_{\Tilde{\Omega}} \partial_t \norm{\hat \vu}_2^2 \dd{\vx} \dd{t} &\leq C_1 \int_{\Tilde{\Omega}} \norm{\hat \vu}_2^2  \dd{\vx} \dd{t} + \int_{\Tilde{\Omega}} \norm{\gR_{pde}}_2^2  \dd{\vx} \dd{t}\\
    \nonumber &+ d^2 \norm{\nabla \vu}_{L^\infty(\Omega)} \int_{\Tilde{\Omega}} \norm{\vu}_2^2 \dd{\vx} \dd{t} + d^2 \norm{\nabla \vu_\theta}_{L^\infty(\Omega)} \int_{\Tilde{\Omega}} \norm{\vu_\theta}_2^2 \dd{\vx} \dd{t} \\ 
    \nonumber &\leq \int_D \norm{\gR_t}_2^2 \dd{\vx} + C_1 \int_{\Tilde{\Omega}} \norm{\hat \vu}_2^2  \dd{\vx} \dd{t} + \int_{\Tilde{\Omega}} \norm{\gR_{pde}}_2^2  \dd{\vx} \dd{t} \nonumber\\
    &+ d^2 \norm{\nabla \vu}_{L^\infty(\Omega)} \int_{\Tilde{\Omega}} \norm{\vu}_2^2 \dd{\vx} \dd{t} + d^2 \norm{\nabla \vu_\theta}_{L^\infty(\Omega)} \int_{\Tilde{\Omega}} \norm{\vu_\theta}_2^2 \dd{\vx} \dd{t} \nonumber\\ 
    &\leq \int_D \norm{\gR_t}_2^2 \dd{\vx} + C_1 \int_{\Omega} \norm{\hat \vu}_2^2  \dd{\vx} \dd{t} + \int_{\Omega} \norm{\gR_{pde}}_2^2  \dd{\vx} \dd{t} \nonumber\\
    &+ d^2 \norm{\nabla \vu}_{L^\infty(\Omega)} \int_{\Omega} \norm{\vu}_2^2 \dd{\vx} \dd{t} + d^2 \norm{\nabla \vu_\theta}_{L^\infty(\Omega)} \int_{\Omega} \norm{\vu_\theta}_2^2 \dd{\vx} \dd{t} \nonumber\\ 
     &\leq C_1 \int_{\Tilde{\Omega}} \norm{\hat \vu}_2^2  \dd{\vx} \dd{t} + C_2 \label{Ns.upperbound.integral2}
\end{align}
where,
\begin{align*}
C_2 &= \int_D \norm{\gR_t}_2^2 \dd{\vx} + \int_{\Tilde{\Omega}} \norm{\gR_{pde}}_2^2  \dd{\vx} \dd{t} + d^2 \norm{\nabla \vu_\theta}_{L^\infty(\Omega)} \int_{\Tilde{\Omega}} \norm{\vu_\theta}_2^2 \dd{\vx} \dd{t} \\
&+ d^2 \norm{\nabla \vu}_{L^\infty(\Omega)} \int_{\Tilde{\Omega}} \norm{\vu}_2^2 \dd{\vx} \dd{t}
\end{align*}

Applying Gronwall's inequality on \plaineqref{Ns.upperbound.integral2} we get
\begin{align}
    \int_D \norm{\hat \vu(\vx, \Tilde{\delta})}_2^2 \dd{\vx} &\leq C_2 + \int_{-\frac{1}{\sqrt{2}} + \delta}^{\Tilde{\delta}} C_2 C_1 e^{\int_t^\delta C_1 \dd{s}} \dd{t} 
    \leq C_2 \left[ 1 + \int_{-\frac{1}{\sqrt{2}} + \delta}^{\Tilde{\delta}} C_1 e^{\frac{C_1}{\sqrt{2}}} \dd{t} \right] \label{Ns.upperbound.integral3}
\end{align}

Integrating \plaineqref{Ns.upperbound.integral3} over $\dd{\Tilde{\delta}}$ we get
\begin{align}
    \nonumber \int_\Omega \norm{\hat \vu(\vx, \Tilde{\delta})}_2^2 \dd{\vx}\dd{\Tilde{\delta}} &\leq C_2 \int_{\frac{-1}{\sqrt{2}} + \delta}^{\delta}\left[ 1 + \int_{-\frac{1}{\sqrt{2}} + \delta}^{\Tilde{\delta}} C_1 e^{\frac{C_1}{\sqrt{2}}} \dd{t} \right] \dd{\Tilde{\delta}}\nonumber\\
    &\leq C_2 \left[ \frac{-1}{\sqrt{2}} + \int_{\frac{-1}{\sqrt{2}} + \delta}^{\delta}\int_{-\frac{1}{\sqrt{2}} + \delta}^{\Tilde{\delta}} C_1 e^{\frac{C_1}{\sqrt{2}}} \dd{t}\dd{\Tilde{\delta}} \right] \nonumber\\
     &\leq C_2 \left[ \frac{-1}{\sqrt{2}} +  \frac{C_1}{4} e^{\frac{C_1}{\sqrt{2}}} \right] \nonumber\\
     \implies\log\left( \int_\Omega \norm{\hat \vu(\vx, \Tilde{\delta})}_2^2 \dd{\vx}\dd{\Tilde{\delta}}\right) &\leq \log{\left(\frac{C_1 C_2}{4}\right)} + \frac{C_1}{\sqrt{2}} \label{Ns.upperbound.integral4}
\end{align}
\end{proof}



\subsection{Useful Lemmas}
\begin{lemma}\label{lemma:NS.upperbound2}
    \begin{align}
        \int_D p \cdot ((q \cdot \nabla)r) dx &= \int_D \left[ \sum_{i=1}^d p_i \left( q \cdot  \nabla r_i \right) \right] dx\nonumber\\
        &\leq \int_D \left[ \sum_{i=1}^d \norm{p}_2 \left( \norm{q}_2 \norm{\nabla r_i}_2 \right) \right] dx \nonumber\\
        &\leq \int_D \norm{p}_2\norm{q}_2 \left[ \sum_{i=1}^d \norm{\nabla r_i}_2 \right] dx \nonumber\\
        &\leq \int_D \norm{p}_2\norm{q}_2 \left[ \sum_{i=1}^d d\norm{\nabla r}_{L^\infty(\Omega)} \right] dx \nonumber\\
        &\leq d^2\norm{\nabla r}_{L^\infty(\Omega)} \int_D \norm{p}_2\norm{q}_2 dx
    \end{align}
\end{lemma}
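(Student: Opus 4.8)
The plan is to read the displayed chain as a sequence of pointwise estimates that hold at every $\vx \in D$ and are then integrated, so that the entire argument reduces to verifying each bound on the integrand and invoking monotonicity of the integral. First I would justify the opening equality, which is purely definitional: writing the vector $(q \cdot \nabla) r$ componentwise, its $i$-th entry is $q \cdot \nabla r_i = \sum_{j=1}^d q_j \partial_j r_i$, whence $p \cdot ((q \cdot \nabla) r) = \sum_{i=1}^d p_i (q \cdot \nabla r_i)$. For the second line I would apply two elementary pointwise bounds inside the sum: the componentwise inequality $|p_i| \le \norm{p}_2$ and the Cauchy--Schwarz inequality $|q \cdot \nabla r_i| \le \norm{q}_2 \norm{\nabla r_i}_2$. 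Multiplying these gives $p_i (q \cdot \nabla r_i) \le \norm{p}_2 \norm{q}_2 \norm{\nabla r_i}_2$ at each point, and integrating the summed estimate over $D$ yields the claimed bound.

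The third line is merely the factoring of the $i$-independent quantity $\norm{p}_2 \norm{q}_2$ out of the finite sum. For the fourth line I would dominate each $\norm{\nabla r_i}_2$ by the supremum norm of the full gradient: using $\norm{\cdot}_2 \le \norm{\cdot}_1$ one has $\norm{\nabla r_i}_2 \le \sum_{j=1}^d |\partial_j r_i| \le d \norm{\nabla r}_{L^\infty(\Omega)}$, where $\norm{\nabla r}_{L^\infty(\Omega)}$ is understood to dominate every scalar partial derivative of $r$ on $\Omega$. The final line then follows by summing this $i$-independent bound over $i = 1, \dots, d$, which produces the factor $d^2$ and pulls the constant $d^2 \norm{\nabla r}_{L^\infty(\Omega)}$ outside the integral.

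Since every step is an elementary norm inequality, there is no genuine obstacle here; the single point to handle carefully is the convention for $\norm{\nabla r}_{L^\infty(\Omega)}$ used in the fourth line. The factor of $d$ there (rather than the sharper $\sqrt{d}$) arises precisely from routing through the $\ell_1$ norm, and the bound is valid provided $\norm{\nabla r}_{L^\infty(\Omega)}$ is taken to bound each entry of the Jacobian of $r$; with that reading the stated constant $d^2$ is correct, even if not optimal.
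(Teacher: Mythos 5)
Your proposal is correct and follows essentially the same route as the paper: the lemma in the paper is stated as a self-contained chain of estimates, and your justifications (componentwise expansion of $(q\cdot\nabla)r$, the bounds $|p_i|\le\norm{p}_2$ and Cauchy--Schwarz for $|q\cdot\nabla r_i|$, factoring the $i$-independent terms, and dominating $\norm{\nabla r_i}_2$ by $d\norm{\nabla r}_{L^\infty(\Omega)}$ before summing over $i$ to get $d^2$) are exactly the steps implicit in that chain. Your remark that the constant could be sharpened to $d^{3/2}$ by using $\norm{\nabla r_i}_2\le\sqrt{d}\,\norm{\nabla r}_{L^\infty(\Omega)}$ instead of routing through the $\ell_1$ norm is also accurate, though the paper keeps the cruder $d^2$.
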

\clearpage


\subsection{Proof of Theorem \ref{th:int_burger_gen_error_bound}} \label{proof:int_burger_gen_error_bound}
\begin{proof}
We define
\begin{align*}
    f(u) = \frac{u^2}{2}
\end{align*}
which means the first equation in (\ref{eq:burger_pdes:6}) can be written as
\begin{align}
    u_t + f(u)_x = 0
    \label{eq:int_burger_proof_0}
\end{align}
Then we define the entropy flux function as
\begin{align*}
    \gQ(u) = \int_a^u sf'(s) ds \quad \mbox{for any }a \in \R
\end{align*}
Let $\hat{u} = u^* - u$. From (\ref{eq:burger_pdes_residual:1}) we get
\begin{align}
    \partial_t \left( \frac{(u^*)^2}{2} \right) + \partial_x \gQ(u^*) = u^* \gR_{int,\theta*}
    \label{eq:int_burger_proof_1}
\end{align}
and from (\ref{eq:int_burger_proof_0}) we obtain
\begin{align}
    \partial_t \left( \frac{u^2}{2} \right) + \partial_x \gQ(u) = 0
    \label{eq:int_burger_proof_2}
\end{align}
Some calculation on (\ref{eq:int_burger_proof_0}) and (\ref{eq:burger_pdes_residual:1}) yields
\begin{align}
    \partial_t (u\hat u) + \partial_x (u(f(u^*) - f(u))) =  [ f(u^*) - f(u) - \hat u f'(u)] u_x + u\gR_{int,\theta^*}
    \label{eq:int_burger_proof_3}
\end{align}
Subtracting (\ref{eq:int_burger_proof_3}) and (\ref{eq:int_burger_proof_2}) from (\ref{eq:int_burger_proof_1}) we get
\begin{align}
    \partial_t S(u,u^*) + \partial_x H(u,u^*) = \hat u\gR_{int,\theta^*} + T_1
    \label{eq:int_burger_proof_4}
\end{align}
with,
\begin{align*}
    S(u,u^*) &\coloneqq \frac{(u^*)^2}{2} - \frac{u^2}{2} - \hat u u = \frac{1}{2} \hat u^2, \\
    H(u,u^*) &\coloneqq \gQ(u^*) - \gQ(u) - u(f(u^*) - f(u)), \\
    T_1 &= - \left[ f(u^*) - f(u) -f'(u) \hat u \right] u_x
\end{align*}
As flux f is smooth, we can apply Taylor expansion\footnote{with the Lagrange form of the remainder} on $T_1$ and expand $ f(u^*)$ at $u$,
\begin{align*}
    T_1 &= -\left[ \cancel{f(u)} + \cancel{f'(u)\hat u} + \frac{f''(u+\gamma(u^*-u))}{2}(u^* - u)^2 - \cancel{f(u)} - \cancel{f'(u)\hat u} \right] u_x \\
    &\left[ \mbox{ where } \gamma \in (0,1) \right] \\
    &= - \frac{1}{2}\ \cancelto{1}{f''(u+\gamma(u^*-u))}\ \hat u^2 u_x \\
    &= - \frac{1}{2} \hat u^2 u_x
\end{align*}
Hence it can be reasonably bounded by
\begin{align}
    |T_1| \leq \norm{u_x}_{L^\infty} \hat u^2
\end{align}
where $C_{u_x}$ is given by $C_{u_x} = \norm{u_x}_{L^\infty}$. Next, we integrate (\ref{eq:int_burger_proof_4}) over the domain (-1,1)
\begin{align}
    \int_{-1}^1 \partial_t S(u,u^*) dx &= - \int_{-1}^1 \partial_x H(u,u^*) dx + \int_{-1}^1 \hat u\gR_{int,\theta^*} dx + \int_{-1}^1 T_1 dx \nonumber \\
    \Rightarrow\ \dv{}{t} \int_{-1}^1 \frac{\hat u^2(x,t)}{2} dx &\leq H(u(-1,t),u^*(-1,t)) - H(u(1,t),u^*(1,t)) \nonumber\\
    &+ C_{u_x} \int_{-1}^1 \hat u^2(x,t) dx + \int_{-1}^1 \hat u(x,t) \gR_{int,\theta^*}(x,t) dx \nonumber\\
    \Rightarrow\ \dv{}{t}\int_{-1}^1\hat u^2(x,t) dx &\leq 2H(u(-1,t),u^*(-1,t)) - 2H(u(1,t),u^*(1,t)) \nonumber\\
    &+ 2C_{u_x} \int_{-1}^1 \hat u^2(x,t) dx + \int_{-1}^1 (\gR_{int,\theta^*}^2(x,t) + \hat u^2(x,t)) dx \nonumber\\
    \Rightarrow\ \dv{}{t}\int_{-1}^1\hat u^2(x,t) dx &\leq 2H(u(-1,t),u^*(-1,t)) - 2H(u(1,t),u^*(1,t)) \nonumber\\
    &+ C \int_{-1}^1 \hat u^2(x,t) dx + \int_{-1}^1 \gR_{int,\theta^*}^2(x,t) dx
    \label{eq:int_burger_proof_5}
\end{align}

where $C = 1 + 2C_{u_x}$. We can estimate $H(u(1,t),u^*(1,t))$ using (\ref{eq:burger_pdes:6})

\begin{align*}
    H(u(1,t),u^*(1,t)) &=\gQ(u^*(1,t)) - \gQ(u(1,t)) - u(1,t)(f(u^*(1,t)) - f(u(1,t))) \\
    &= \gQ'(\gamma_1\gR_{sb,1,\theta^*}(t))\gR_{sb,1,\theta^*}(t) - \frac{u(1,t)}{2}\left[ \hat u(1,t) \left[u^*(1,t) + u(1,t)\right] \right] \\
    &[\mbox{ for some } \gamma_1 \in (0,1) \mbox{ by the mean-value theorem}] \nonumber\\
    &= \gamma_1 f'(\gamma_1 \gR_{sb,1,\theta^*})\gR_{sb,1,\theta^*}^2(t) - \frac{u(1,t)}{2}\left[ \gR_{sb,1,\theta^*} + 2u(1,t) \right]\gR_{sb,1,\theta^*} \\
    &= \gamma_1 \left[f'(\gamma_1 \gR_{sb,1,\theta^*}) - \frac{u(1,t)}{2}\right]\gR_{sb,1,\theta^*}^2(t) - u^2(1,t)\gR_{sb,1,\theta^*} \\
    &\leq C_{2b} \gR_{sb,1,\theta^*}^2(t) + u^2(1,t)|\gR_{sb,1,\theta^*}| \nonumber\\
    &\mbox{ with } C_{2b} = C_{2b} (\norm{f'}_\infty, \norm{u}_{C^1([-1,1]\times[-1+\delta,\delta])})
\end{align*}

Similarly we can estimate
\begin{align*}
    H(u(-1,t),u^*(-1,t)) \leq C_{2b} \gR_{sb,-1,\theta^*}^2(t) + u^2(-1,t)|\gR_{sb,-1,\theta^*}|
\end{align*}

Now, we can integrate (\ref{eq:int_burger_proof_5}) over the time interval $[-1+\delta, \bar \delta]$ for any $\bar \delta \in [-1+\delta, \delta]$ and use the above inequalities along with (\ref{eq:burger_pdes_residual:2})

\begin{align}
    \int_{-1+\delta}^{\bar\delta}\dv{}{t}\int_{-1}^1\hat u^2(x,t) dx dt &\leq \int_{-1+\delta}^{\bar\delta} (2H(u(-1,t),u^*(-1,t)) - 2H(u(1,t),u^*(1,t))) dt \nonumber\\
    &+ \int_{-1+\delta}^{\bar\delta} C \int_{-1}^1 \hat u^2(x,t) dx dt + \int_{-1+\delta}^{\bar\delta} \int_{-1}^1 \gR_{int,\theta^*}^2(x,t) dx dt \nonumber
\end{align}
\begin{align}
    \Rightarrow\ \int_{-1}^1\hat u^2(x,\bar\delta) dx &\leq \int_{-1}^1\hat u^2(x,-1+\delta) dx + 2 C_{2b} \left[\int_{-1+\delta}^{\delta}\gR_{sb,-1,\theta^*}^2 (t) dt + \int_{-1+\delta}^{\delta}\gR_{sb,1,\theta^*}^2 (t) dt \right] \nonumber\\
    &+ 2 \left[\int_{-1+\delta}^{\delta} u^2(-1,t) |\gR_{sb,-1,\theta^*}| dt + \int_{-1+\delta}^{\delta} u^2(1,t) |\gR_{sb,1,\theta^*}| dt \right] \nonumber\\
    &+ C \int_{-1+\delta}^{\bar \delta}\int_{-1}^1 \hat u^2(x,t) dx dt + \int_{-1+\delta}^{\delta} \int_{-1}^1 \gR_{int,\theta^*}^2(x,t) dx dt \nonumber\\
    \nonumber\\
    \Rightarrow\ \int_{-1}^1\hat u^2(x,\bar\delta) dx &\leq \int_{-1}^1 \gR_{tb,\theta^*}(x) dx + 2 C_{2b} \left[\int_{-1+\delta}^{\delta}\gR_{sb,-1,\theta^*}^2 (t) dt + \int_{-1+\delta}^{\delta}\gR_{sb,1,\theta^*}^2 (t) dt \right] \nonumber\\
    &+ 2 C_{1b} \left[\int_{-1+\delta}^{\delta} |\gR_{sb,-1,\theta^*}| dt + \int_{-1+\delta}^{\delta} |\gR_{sb,1,\theta^*}| dt \right] \nonumber\\
    &+ C \int_{-1+\delta}^{\bar \delta}\int_{-1}^1 \hat u^2(x,t) dx dt + \int_{-1+\delta}^{\delta} \int_{-1}^1 \gR_{int,\theta^*}^2(x,t) dx dt \nonumber\\
    \nonumber &\text{ where } C_{1b} = C_{1b}(\norm{u(1,t)}_{L^\infty})
    \\
    &\leq \int_{-1}^1 \gR_{tb,\theta^*}(x) dx + 2 \bar C_{2b} \left[\int_{-1+\delta}^{\delta}\gR_{sb,-1,\theta^*}^2 (t) dt + \int_{-1+\delta}^{\delta}\gR_{sb,1,\theta^*}^2 (t) dt \right] \nonumber\\
    &+ 2 C_{1b} (\delta - (\delta - 1) )^\frac{1}{2} \left[ \left(\int_{-1+\delta}^{\delta} \gR^2_{sb,-1,\theta^*} dt\right)^\frac{1}{2} + \left(\int_{-1+\delta}^{\delta} \gR^2_{sb,1,\theta^*} dt\right)^\frac{1}{2} \right] \nonumber\\
    &+ C \int_{-1+\delta}^{\bar\delta}\int_{-1}^1 \hat u^2(x,t) dx dt + \int_{-1+\delta}^{\delta} \int_{-1}^1 \gR_{int,\theta^*}^2(x,t) dx dt \nonumber\\
    &\nonumber \text{by using Holder's inequality}\\
    &\leq C_T + C \int_{-1+\delta}^{\bar\delta}\int_{-1}^1 \hat u^2(x,t) dx dt \nonumber
    \end{align}
    \begin{align}
    \mbox{where}\ C_T = \int_{-1}^1 \gR_{tb,\theta^*}(x) \dd{x} + 2 C_{2b} \left[\int_{-1+\delta}^{\delta}\gR_{sb,-1,\theta^*}^2 (t) \dd{t} + \int_{-1+\delta}^{\delta}\gR_{sb,1,\theta^*}^2 (t) \dd{t} \right] \nonumber\\ 
    + \int_{-1+\delta}^{\delta} \int_{-1}^1 \gR_{int,\theta^*}^2(x,t) \dd{x} \dd{t} + 2 C_{1b} \left[ \left(\int_{-1+\delta}^{\delta} \gR^2_{sb,-1,\theta^*} \dd{t}\right)^\frac{1}{2} + \left(\int_{-1+\delta}^{\delta} \gR^2_{sb,1,\theta^*} \dd{t}\right)^\frac{1}{2} \right]
    \label{eq:int_burger_proof_6}
\end{align}

Using integral form of Gr$\ddot{o}$nwall's inequality on (\ref{eq:int_burger_proof_6})
\begin{align}
     \int_{-1}^1 \hat u^2(x, \bar\delta) dx &\leq C_T + \int_{-1 + \delta}^{\bar\delta} C_T C e^{\int_t^{\delta} C ds} dt 
     \leq \left[ 1 + \int_{-1 + \delta}^{\bar\delta} Ce^{C} dt \right] C_T
     \label{eq:int_burger_proof_7}
\end{align}
Integrating (\ref{eq:int_burger_proof_7}) over $\bar\delta$ together with the definition of generalization error (\ref{eq:gen_error}) we get
\begin{align}
    \int_{-1 + \delta}^{\delta} \int_{-1}^1 \hat u^2(x, \bar \delta) dx d\bar\delta &\leq C_T \int_{-1 + \delta}^{\delta}\left[ 1 + \int_{-1 + \delta}^{\bar\delta} Ce^{C} dt \right] d\bar\delta \nonumber\\
    \gE_G^2 &\leq \left[ 1 + Ce^{C} \right] C_T
    \label{eq:int_burger_proof_8}
\end{align}
\end{proof}

\subsection{Making The Data Dependence Explicit in the Bounds for $1+1$ Burgers' PDE} \label{appendix:subsec:burger_gen_error_bound}

Suppose $g\ \colon\ \mathbb{D} \rightarrow \R^m$ such that $g \in Z^* \subset L^p(\mathbb{D},\R^m)$ and $\mathbb{D} \subset \R^{\bar d}$ and suppose that the integral that needs to be approximated is,
\begin{align}
    \bar g \coloneqq \int_\mathbb{D} g(y)\ dy
    \label{eq:integral}
\end{align}
where $dy$ denotes the $\bar d$-dimensional Lebesgue measure. To approximate this integral by the quadrature rule, we need \begin{inparaenum}[(i)] \item the quadrature points $y_i \in \mathbb{D}$ for $1\leq i \leq N$ for some $N \in \N$ and \item weights $w_i$ with $w_i \in \R_+$ \end{inparaenum}. Then we can approximate \plaineqref{eq:integral} by the quadrature,
\begin{align}
    \bar g_N\ \coloneqq\ \sum_{i=1}^N w_ig(y_i)
    \label{eq:quad_rule}
\end{align}
We recall that for implementing the Gaussian quadrature rule the quadrature points are roots of appropriate Legendre polynomials and the quadrature weights are the integrals of the associated Lagrange interpolation polynomials. \cite{devore1984quad} Then its known \cite{siddhartha2022generror} that, there exists a function $C_{quad}$ s.t the error of this approximation is bounded as,
\begin{align}
    |\ \bar g - \bar g_N\ | \leq C_{quad}(\norm{g}_{Z^*}, \bar d)\ N^{-\alpha}\ ,\ \mbox{for some}\ \alpha > 0
    \label{eq:quad_error}
\end{align}

\subsubsection{Applying Quadrature Rule on Theorem \ref{th:int_burger_gen_error_bound} }
In light of the  above we choose a training set consisting of quadrature points of the following $4$ types,
\begin{itemize}
    \item $N_{int}$ number of interior points $\{x_n, t_{n,\delta}\}$  where $x_n \in [-1,1]$ and $t_{n,\delta} \in [-1+\delta, \delta]$
    \item $N_{sb}$ number of spatial boundary points $\{-1,t_{n,\delta}\}$  where $t_{n,\delta} \in [-1+\delta, \delta]$
    \item $N_{sb}$ number of spatial boundary points $\{1,t_{n,\delta}\}$  where $t_{n,\delta} \in [-1+\delta, \delta]$
    \item $N_{tb}$ number of points at initial conditions, $\{x_n,-1+\delta\}$ where $x_n \in [-1,1]$
\end{itemize}

For appropriate weights (one for every of the sample points chosen above) and and using the above sample points as the quadrature points, the loss function corresponding to the residuals in Equation (\ref{eq:burger_pdes_residual:1}), (\ref{eq:burger_pdes_residual:2}) and (\ref{eq:burger_pdes_residual:3}), evaluated for a predictor with weights $\theta^*$, can then be written as,
\begin{align} \label{eq:sid_training_error}
\small
    \gL(\theta^*)\ =\ \gE_T^2\ \coloneqq \frac{1}{N_{tb}}\underbrace{\sum_{n=1}^{N_{tb}} w_n^{tb}|\mathcal{R}_{tb,\theta^*}(x_n)|^2}_{(\gE_T^{tb})^2} 
    + \frac{1}{N_{sb}}\underbrace{\sum_{n=1}^{N_{sb}} w_n^{sb}|\mathcal{R}_{sb,-1,\theta^*}(t_{n,\delta})|^2}_{(\gE_T^{sb,-1})^2} 
    \nonumber\\
    + \frac{1}{N_{sb}}\underbrace{\sum_{n=1}^{N_{sb}} w_n^{sb}|\mathcal{R}_{sb,1,\theta^*}(t_{n,\delta})|^2}_{(\gE_T^{sb,1})^2} 
    + \frac{\lambda}{N_{int}}\underbrace{ \sum_{n=1}^{N_{int}} w_n^{int}|\mathcal{R}_{int,\theta^*}(x_n, t_{n,\delta})|^2}_{(\gE_T^{int})^2}
\end{align}


In contrast to the above note that the experiments shown in Section~\ref{sec:exp.burgers} for its empirical loss used randomly chosen collocation points and choose  all $w$-s and $\lambda$ to be equal to $1$. But, via using the quadrature rule, for the above loss, one can get the following risk bound where the dependency on the number of collocation points becomes explicit, 


\clearpage 

\begin{theorem}\label{th:burger_gen_error_bound}
Let $u \in C^1((-1+\delta,\delta) \times (-1,1))$ be the unique solution of the viscous scalar conservation law for any $k \geq 1$. Let $u^* = u_{\theta^*}$ be any neural surrogate solution. Then its generalization error (\ref{eq:gen_error}) is bounded by
    \begin{align}
        \gE_G^2(u^*) &\leq \left(1 + Ce^{C}\right) \left[ \sum_{n=1}^{N_{tb}} w_n^{tb}|\mathcal{R}_{tb,\theta^*}(x_n)|^2 + \sum_{n=1}^{N_{int}} w_n^{int}|\mathcal{R}_{int,\theta^*}(x_n, t_{n,\delta})|^2 \right. \nonumber\\
        &+ \left. 2 C_{2b} \left(\sum_{n=1}^{N_{sb}} w_n^{sb}|\mathcal{R}_{sb,-1,\theta^*}(t_{n,\delta})|^2 + \sum_{n=1}^{N_{sb}} w_n^{sb}|\mathcal{R}_{sb,1,\theta^*}(t_{n,\delta})|^2\right) + 2 C_{1b} \left(\gE_T^{sb,-1} + \gE_T^{sb,1}\right) \right. \nonumber\\
        &+ \left. \frac{C_{quad}^{tb}}{N_{tb}^{\alpha_{tb}}} + \frac{C_{quad}^{int}}{N_{int}^{\alpha_{int}}} + 2C_{2b} \left( \frac{\left( C_{quad}^{sb,-1} + C_{quad}^{sb,1} \right)}{N_{sb}^{\alpha_{sb}}} \right) + 2 C_{1b} \left( \frac{\left( C_{quad}^{sb,-1} + C_{quad}^{sb,1} \right)} {N_{sb}^{\frac{\alpha_{sb}}{2}}} \right)  \right]
        \label{eq:burger_bound__th_1}
    \end{align}
    where $C = 1 + 2C_{u_x}$, with
    \begin{align}
        C_{u_x} &= \norm{u_x}_{L^\infty} = \norm{\frac{1}{t-1}}_{L^\infty([-1+\delta,\delta])} \nonumber \\
        C_{1b} &= \norm{u(1,t)}^2_{L^\infty} = \norm{\frac{1}{1-t}}^2_{L^\infty([-1+\delta,\delta])} \nonumber \\
        C_{2b} &= \norm{u_{\theta^*}(1,t)}_{L^\infty([-1+\delta,\delta])} + \frac{3}{2}\norm{\frac{1}{t-1}}_{L^\infty([-1+\delta,\delta])}
        \label{eq:burger_bound__th_2}
    \end{align}
    and $ C_{quad}^{tb} = C_{quad}^{tb}\left( \norm{\gR_{tb,\theta^*}^2}_{C_k} \right)$, $C_{quad}^{int} = C_{quad}^{int}\left( \norm{\gR_{int,\theta^*}^2}_C^{k-2} \right)$, $C_{quad}^{sb,-1} = C_{quad}^{sb,-1} \left( \norm{\gR_{sb,-1,\theta^*}}_{C^k} \right)$, $C_{quad}^{sb,1} = C_{quad}^{sb,1}\left( \norm{\gR_{sb,1,\theta^*}^2}_{C^k} \right)$ are constants of the quadrature bound.
\end{theorem}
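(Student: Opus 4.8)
The plan is to obtain this bound directly from Theorem~\ref{th:int_burger_gen_error_bound} by discretizing each of the integrals on the right-hand side of \eqref{eq:int_burger_bound_th_1} through the Gauss-quadrature machinery of \plaineqref{eq:quad_rule}--\plaineqref{eq:quad_error}. Writing the bracketed sum of integrated residual terms in \eqref{eq:int_burger_bound_th_1} as $C_T$, Theorem~\ref{th:int_burger_gen_error_bound} already delivers $\gE_G^2 \le (1 + Ce^C)\,C_T$, so no fresh PDE analysis is needed: the entire content of the present statement is the passage from continuous integrals to their quadrature approximations plus the resulting discretization-error terms, after which the common prefactor $(1+Ce^C)$ is carried along unchanged.

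First I would treat each integral appearing in $C_T$ separately. For the initial-condition term $\int_{-1}^1 \gR_{tb,\theta^*}(x)\,dx$, the interior term $\int_{-1+\delta}^{\delta}\int_{-1}^1 \gR_{int,\theta^*}^2(x,t)\,dx\,dt$, and the two squared spatial-boundary terms $\int_{-1+\delta}^{\delta} \gR_{sb,-1,\theta^*}^2(t)\,dt$ and $\int_{-1+\delta}^{\delta} \gR_{sb,1,\theta^*}^2(t)\,dt$, I would invoke \eqref{eq:quad_error} with the corresponding integrand $g$, ambient dimension $\bar d$, and point set of size $N_{tb}$, $N_{int}$, or $N_{sb}$ respectively. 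Each such integral is thereby replaced by its quadrature sum — which is exactly one of the empirical pieces $(\gE_T^{tb})^2$, $(\gE_T^{int})^2$, $(\gE_T^{sb,-1})^2$, $(\gE_T^{sb,1})^2$ of \eqref{eq:sid_training_error} — together with an additive error bounded by $C_{quad}/N^{\alpha}$ carrying the appropriate superscript. Substituting these into $C_T$ produces the first two lines of \eqref{eq:burger_bound__th_1} as well as the leading error terms $C_{quad}^{tb}/N_{tb}^{\alpha_{tb}}$, $C_{quad}^{int}/N_{int}^{\alpha_{int}}$, and $2C_{2b}(C_{quad}^{sb,-1}+C_{quad}^{sb,1})/N_{sb}^{\alpha_{sb}}$.

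The one step requiring slightly more care is the pair of square-root terms weighted by $2C_{1b}$, namely those involving $\bigl(\int_{-1+\delta}^{\delta} \gR_{sb,\pm1,\theta^*}^2\,dt\bigr)^{1/2}$. Here I would first bound the integral under the root by its quadrature sum plus the error $C_{quad}^{sb,\pm1}/N_{sb}^{\alpha_{sb}}$ as above, and then apply the subadditivity inequality $\sqrt{a+b}\le\sqrt{a}+\sqrt{b}$ to detach the discretization error from the empirical sum. This splitting converts $\bigl(C_{quad}^{sb,\pm1}/N_{sb}^{\alpha_{sb}}\bigr)^{1/2}$ into a term scaling as $N_{sb}^{-\alpha_{sb}/2}$, which is precisely the source of the final summand $2C_{1b}(C_{quad}^{sb,-1}+C_{quad}^{sb,1})/N_{sb}^{\alpha_{sb}/2}$ in \eqref{eq:burger_bound__th_1}. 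Collecting all quadrature sums and error contributions and factoring out $(1+Ce^C)$ then assembles the stated inequality.

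The main obstacle is not conceptual but lies in verifying that \eqref{eq:quad_error} applies legitimately to each integrand, which requires the squared residuals $\gR_{tb,\theta^*}^2$, $\gR_{int,\theta^*}^2$, $\gR_{sb,-1,\theta^*}^2$, $\gR_{sb,1,\theta^*}^2$ to lie in the regularity class $Z^* \subset C^k$ demanded by the quadrature bound, so that the constants $C_{quad}^{tb}$, $C_{quad}^{int}$, $C_{quad}^{sb,-1}$, $C_{quad}^{sb,1}$ — each a function of the $C^k$-norm of its integrand — are finite. This is where the smoothness of the true solution $u$ and of the neural surrogate $u^*=u_{\theta^*}$, together with the derivatives entering the residuals, must be tracked carefully; establishing these norm bounds and matching each integrand to the correct value of $\bar d$ and $\alpha$ is the bookkeeping-heavy portion of the argument, but it introduces no new analytic ideas beyond those already used for Theorem~\ref{th:int_burger_gen_error_bound}.
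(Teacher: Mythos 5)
Your proposal is correct and follows essentially the same route as the paper: start from $\gE_G^2 \le (1+Ce^C)\,C_T$ established in the proof of Theorem~\ref{th:int_burger_gen_error_bound}, apply the quadrature error bound \plaineqref{eq:quad_error} term by term to each integral in $C_T$, and handle the $2C_{1b}$ square-root terms via subadditivity of $\sqrt{\cdot}$ to produce the $N_{sb}^{-\alpha_{sb}/2}$ contribution. Your closing remark about verifying the $C^k$-regularity of the squared residuals is a detail the paper leaves implicit (it simply declares the $C_{quad}$ constants as functions of the relevant $C^k$-norms), but it does not change the argument.
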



\begin{proof}
    In equation (\ref{eq:int_burger_proof_8}) within the proof of Theorem \ref{th:int_burger_gen_error_bound} we see that
    \begin{align}
        \int_{-1 + \delta}^{\delta} \int_{-1}^1 \hat u^2(x, \bar \delta) dx d\bar\delta &\leq C_T \int_{-1 + \delta}^{\delta}\left[ 1 + \int_{-1 + \delta}^{\bar\delta} Ce^{C} dt \right] d\bar\delta \nonumber\\
        \gE_G^2 &\leq \left[ 1 + Ce^{C} \right] C_T \nonumber
    \end{align}
    where
    \begin{align}
        C_T = \underbrace{\int_{-1}^1 \gR_{tb,\theta^*}(x) dx}_1 + \underbrace{2 C_{2b} \left[\int_{-1+\delta}^{\delta}\gR_{sb,-1,\theta^*}^2 (t) dt + \int_{-1+\delta}^{\delta}\gR_{sb,1,\theta^*}^2 (t) dt \right]}_3 \nonumber\\ 
        + \underbrace{\int_{-1+\delta}^{\delta} \int_{-1}^1 \gR_{int,\theta^*}^2(x,t) dx dt}_2 + \underbrace{2 C_{1b} \left[ \left(\int_{-1+\delta}^{\delta} \gR^2_{sb,-1,\theta^*} dt\right)^\frac{1}{2} + \left(\int_{-1+\delta}^{\delta} \gR^2_{sb,1,\theta^*} dt\right)^\frac{1}{2} \right]}_4
        \label{eq:quad_burger_proof_1}
    \end{align}

    
    Applying quadrature bounds on \plaineqref{eq:quad_burger_proof_1} we get,
    \begin{align}
        C_T &\leq \underbrace{\sum_{n=1}^{N_{tb}} w_n^{tb}|\gR_{tb,\theta^*}(x_n)|^2 + C_{quad}^{tb}(\norm{\gR_{tb,\theta^*}}_{C^k})N_{tb}^{-\alpha_{tb}}}_{1} 
        \nonumber\\
        &+ \underbrace{\sum_{n=1}^{N_{int}} w_n^{int}|\gR_{int,\theta^*}(x_n,t_{n,\delta})|^2 + C_{quad}^{int}(\norm{\gR_{int,\theta^*}}_{C^{k-2}})N_{int}^{-\alpha_{int}}}_{2} \nonumber\\
        &+ \underbrace{2C_{2b} \left[ \sum_{n=1}^{N_{sb}} w_n^{sb} |\gR_{sb,-1,\theta^*}(t_{n,\delta})|^2 + \sum_{n=1}^{N_{sb}} w_n^{sb} |\gR_{sb,1,\theta^*}(t_{n,\delta})|^2 \right.}_{3} 
        \nonumber\\
        &+ \underbrace{\left.\left( C_{quad}^{sb}(\norm{\gR_{sb,-1,\theta^*}}_{C^k}) + C_{quad}^{sb}(\norm{\gR_{sb,1,\theta^*}}_{C^k}) \right) N_{sb}^{\alpha_{sb}} \right]}_{3} \nonumber\\
        &+ \underbrace{2C_{1b} \left[ \left(\sum_{n=1}^{N_{sb}} w_n^{sb} |\gR_{sb,-1,\theta^*}(t_{n,\delta})|^2\right)^\frac{1}{2} + \left(\sum_{n=1}^{N_{sb}} w_n^{sb} |\gR_{sb,1,\theta^*}(t_{n,\delta})|^2\right)^\frac{1}{2} \right.}_{4}
        \nonumber\\
        &+ \underbrace{ \left.\left( C_{quad}^{sb}(\norm{\gR_{sb,-1,\theta^*}}_{C^k}) + C_{quad}^{sb}(\norm{\gR_{sb,1,\theta^*}}_{C^k}) \right)^\frac{1}{2} N_{sb}^\frac{\alpha_{sb}}{2} \right]}_{4}
    \end{align}
    Replacing the sums of residuals by training error we get
    \begin{align}
        \gE_G^2 &\leq \left(1 + Ce^{C}\right) \left[ \left( \gE_T^{tb} \right)^2 + \left( \gE_T^{int} \right)^2 + 2 C_{2b} \left(\left(\gE_T^{sb,-1}\right)^2 + \left(\gE_T^{sb,1}\right)^2\right) + 2 C_{1b} \left(\gE_T^{sb,-1} + \gE_T^{sb,1}\right) \right] \nonumber\\
        &+ \left(1 + Ce^{C}\right) \left[ C_{quad}^{tb} N_{tb}^{-\alpha_{tb}} + C_{quad}^{int} N_{int}^{-\alpha_{int}} + 2 C_{2b} \left( \left( C_{quad}^{sb,-1} + C_{quad}^{sb,1} \right) N_{sb}^{-\alpha_{sb}} \right) \right.
        \nonumber\\
        &+ 2 C_{1b} \left. \left( \left( C_{quad}^{sb,-1} + C_{quad}^{sb,1} \right) N_{sb}^{\frac{-\alpha_{sb}}{2}} \right)  \right]
    \end{align}
\end{proof}


\clearpage 
\section{Plotting the Behaviour of RHS of Equation \ref{eq:int_burger_bound_th_1} for Varying Widths}
\begin{figure}[!h]
  \centering
    \includegraphics[width=0.49\textwidth]{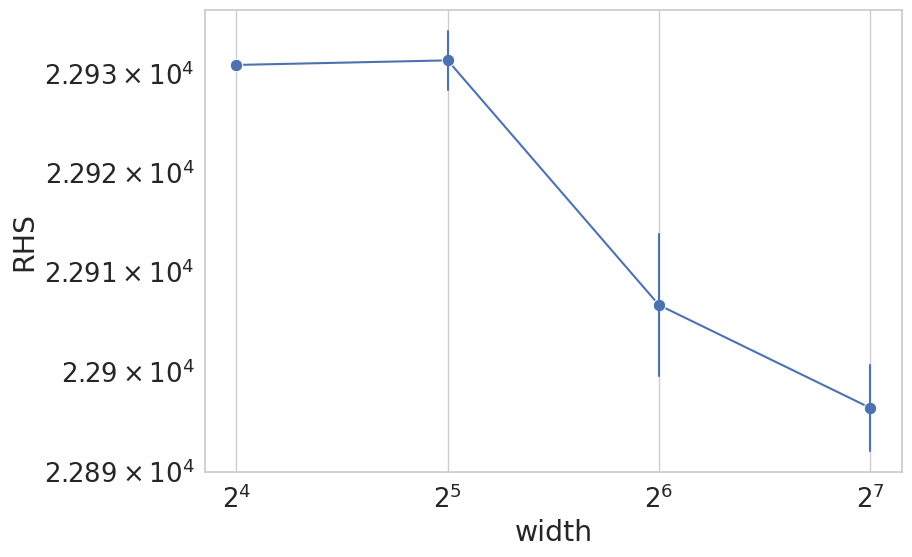}
  \caption{The above plot tracks the RHS of equation (\ref{eq:int_burger_bound_th_1}) in Theorem \ref{th:int_burger_gen_error_bound} for training a depth $2$ net at  different widths towards solving equation \ref{eq:burger_pdes:6} at $\delta = \frac{1}{2}$}
  \label{fig:plot_Burgers_varywidth}
\end{figure}

\section{A Study of the Approximate Invariance of the Training Time for the 1D Burgers' PDE vs Proximity to Singularity}

\begin{figure}[htbp!]
    \centering
    \begin{subfigure}[h]{0.49\textwidth}
        \includegraphics[width=\textwidth]{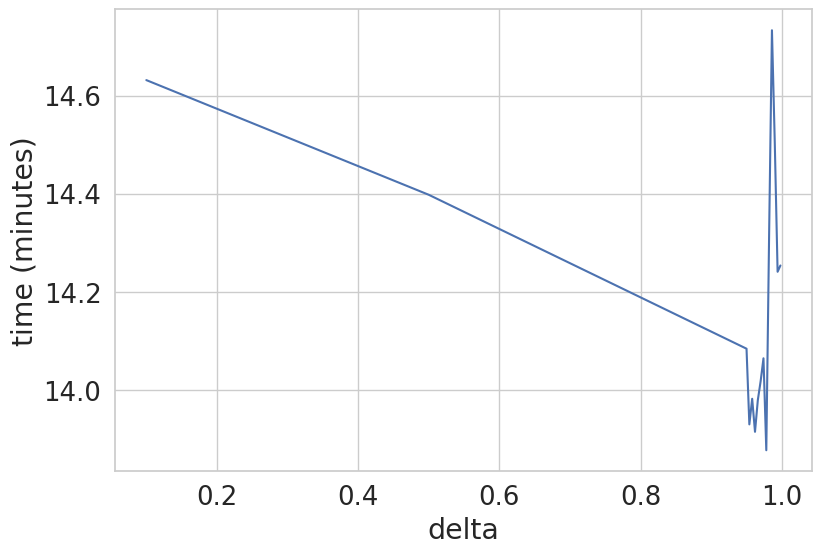}
        \caption{width=30}
        \label{fig:plot_}
    \end{subfigure}
    \hfill
    \begin{subfigure}[h]{0.49\textwidth}
        \includegraphics[width=\textwidth]{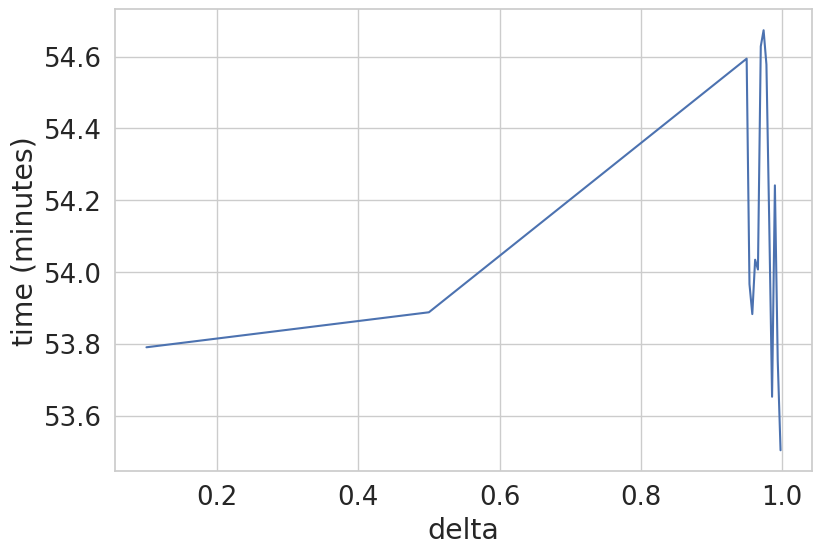}
        \caption{width=300}
        \label{fig:plot_}
    \end{subfigure}
    \caption{The above plots show that the time taken to train a PINN on equation \ref{eq:burger_pdes:6} barely changes for different values of $\delta$ - a measure of proximity to blow-up and that this holds at two widely separated widths of the net.}
    \vspace{-0.30em}
    \label{fig:plot_time_burgers}
\end{figure}

\end{document}